
\listfiles
\documentclass[11pt,letterpaper]{article}

\usepackage{graphicx} 
\usepackage{subfigure} 
\usepackage{natbib}
\usepackage{paralist}
\usepackage{algorithm}
\usepackage{algorithmic}
\usepackage{amsmath,amssymb,xspace,amsthm,Tabbing,bm}
\usepackage{color,mathtools}
\usepackage{listings}
\usepackage{multirow}
\usepackage{fullpage}


\def \fro {\mathrm{F}}
\def\xray{\textsc{Xray} }
\def\xraykl{\textsc{Xray}-KL }
\def\xrayis{\textsc{Xray}-IS }
\def\xrayl2{\textsc{Xray}-$\ell_2$ }
\def\hott{\emph{Hottopixx} }

\newcommand{\punt}[1]{}

\newtheorem{thm}{Theorem}[section]
\newtheorem{lem}{Lemma}[section]
\def\argmax{\mathop{\rm arg\,max}}
\def\argmin{\mathop{\rm arg\,min}}

\newcommand{\reals}{\mathbb{R}}

\newcommand{\cp}{\mathcal{P}}

\newcommand{\cS}{\mathcal{S}}
\def\argmax{\mathop{\rm arg\,max}}
\def\argmin{\mathop{\rm arg\,min}}
\newcommand{\vX}{\mathbf{X}}
\newcommand{\vY}{\mathbf{Y}}

\newcommand{\vB}{\mathbf{B}}

\newcommand{\vD}{\mathbf{D}}

\newcommand{\vL}{\mathbf{L}}
\newcommand{\vI}{\mathbf{I}}
\newcommand{\vC}{\mathbf{C}}

\newcommand{\vN}{\mathbf{N}}

\newcommand{\vR}{\mathbf{R}}
\newcommand{\vS}{\mathbf{S}}

\newcommand{\vZ}{\mathbf{Z}}
\newcommand{\vW}{\mathbf{W}}
\newcommand{\vH}{\mathbf{H}}

\newcommand{\vLam}{\bm{\Lambda}}
\newcommand{\vLambda}{\bm{\Lambda}}

\newcommand{\vw}{\boldsymbol{w}}
\newcommand{\vx}{\mathbf{x}}

\newcommand{\vdelta}{\boldsymbol{\delta}}

\newcommand{\vh}{\boldsymbol{h}}

\newcommand{\vz}{\boldsymbol{z}}

\newcommand{\vv}{\mathbf{v}}

\newcommand{\vu}{\mathbf{u}}
\newcommand{\vp}{\mathbf{p}}








\newcommand{\bq}{\begin{equation}}
\newcommand{\eq}{\end{equation}}
\newcommand{\ba}{\begin{eqnarray}}
\newcommand{\ea}{\end{eqnarray}}


\newcommand{\mbf}[1]{\mathbf{#1}}
\newcommand{\mbb}[1]{\mathbb{#1}}
\newcommand{\mcal}[1]{\mathcal{#1}}
\newcommand{\remove}[1]{}

\newcommand{\Cpp}{C\kern-0.05em\texttt{+\kern-0.03em+}}

\newcommand{\ConceptCpp}{ConceptC\kern-0.05em\texttt{+\kern-0.03em+}}
	
\title{Near-separable Non-negative Matrix Factorization with $\ell_1$- and Bregman Loss Functions}
\author{ {Abhishek Kumar} \hspace{8mm} {Vikas Sindhwani} \\ \{abhishk,vsindhw\}@us.ibm.com
 \\ \\ IBM T.J. Watson Research Center, Yorktown Heights, NY 10598 USA
            }
\date{}

\begin{document}

\maketitle

\begin{abstract}
Recently, a family of tractable NMF algorithms have been proposed under the
assumption that the data matrix satisfies a separability
condition~\cite{DonohoStodden,arora.stoc12}. Geometrically, this
condition reformulates the NMF problem as that of finding the extreme rays
of the conical hull of a finite set of vectors.
 In this paper, we develop several extensions of the conical hull procedures
of~\cite{xray.icml13} for robust ($\ell_1$)
approximations and Bregman divergences. Our methods inherit all the advantages
of~\cite{xray.icml13} including scalability and noise-tolerance. We show that
on foreground-background separation problems in computer vision, robust
near-separable NMFs match the performance of Robust PCA, considered state of the
art on these problems, with an order of magnitude faster training time. We also
demonstrate applications in exemplar selection settings.
 
\end{abstract}

\section{Introduction}
\def\xray{\textsc{Xray}}
The problem of non-negative matrix factorization (NMF) is to express a non-negative matrix $\vX$ of size $m\times n$, either exactly
or approximately, as a product of 
two non-negative matrices, $\vW$ of size $m\times r$ and $\vH$ of size $r\times
n$. Approximate NMF attempts to minimize a  measure of divergence between the
matrix $\vX$ and the factorization $\vW\vH$.
The inner-dimension of the factorization $r$ is usually taken to be much smaller than $m$ and $n$ to get interpretable part-based representation
of data~\cite{LeeSeung}. NMF is used in a wide range of applications, e.g., topic modeling and text mining, hyper-spectral image analysis, 
audio source separation, and microarray data analysis~\cite{nmf:book}.

The exact and approximate NMF problem is NP-hard. Hence, traditionally,
algorithmic work in NMF has focused on treating it as an instance of non-convex
optimization~\cite{nmf:book,LeeSeung,cjlin.nmf07,dhillon.kdd11} leading to
algorithms lacking optimality guarantees beyond convergence to a stationary
point. 
Promising alternative approaches have emerged recently based on a
\emph{separability} assumption on the data
\cite{arora.stoc12,bittorf.12,gillis.12,xray.icml13,esser.11} which enables the
NMF problem to be solved efficiently and exactly.
Under this assumption, the data matrix $\vX$ is said to be $r$-separable if all
columns of $\vX$ are contained in the conical hull generated by a subset of $r$
columns of $\vX$.  In other words, if $\vX$ admits a factorization $\vW\vH$
then the separability assumption states that the columns of $\vW$ are present in $\vX$ at positions
given by an unknown index set $A$ of size $r$. Equivalently, the corresponding
columns of the right factor matrix $\vH$ constitute the $r\times r$ identity matrix, i.e.,
$\vH_A = \vI$. We refer to these columns indexed by $A$ as \emph{anchor
columns}.

The separability assumption was first investigated by \citet{DonohoStodden} in
the context of deriving conditions for uniqueness of NMF.
NMF under separability assumption has been studied for topic modeling in text
~\cite{xray.icml13,arora2012practical} and hyper-spectral
imaging~\cite{gillis.12,esser.11}, and separability has turned out to be a
reasonable assumption in these two applications. In the context of topic
modeling where $\vX$ is a document-word matrix and $\vW$, $\vH$ are
document-topic and topic-word associations respectively, it translates to
assuming that there is at least one word in every topic that is unique to itself
and is not present in other topics.

Our starting point in this paper is the family of conical hull finding
procedures called~\textsc{Xray} introduced
in~\cite{xray.icml13} for near-separable NMF problems with Frobenius norm
loss.~\xray~finds anchor columns one after the other, incrementally expanding
the cone and using exterior columns to locate the next anchor.~\xray~has several
appealing features: (i) it requires no more than $r$ iterations each of which is
parallelizable, (ii) it empirically demonstrates  
noise-tolerance, (iii) it admits efficient model selection, and (iv) it does not
require normalizations or preprocessing needed in other methods. However, in the
presence of outliers or different noise characteristics, the use of Frobenius
norm approximations is not optimal.  

\begin{figure}
\begin{center}
\includegraphics[clip=true,trim=0 105mm 0
0,width=0.7\linewidth,height=2.5cm]{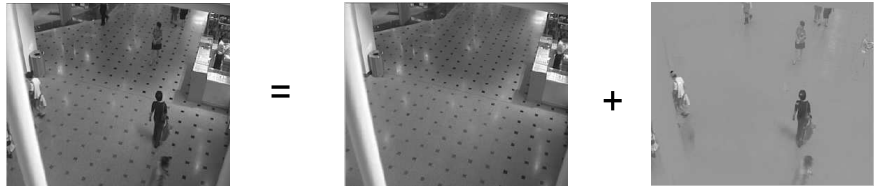}   
\end{center}
\vspace{-5mm}
\caption{Robust\xray~applied to video background-foreground separation
problem}\label{fig:mall}
\vspace{-3mm}
\end{figure}

In this paper, we extend~\xray~to provide robust factorizations with respect to
$\ell_1$ loss, and approximations with respect to the family of Bregman
divergences.  Figure~\ref{fig:mall} shows a motivating application from computer
vision. Given a sequence of video frames, the goal is to separate a
near-stationary background from the foreground of moving objects that are
relatively more dynamic across frames but span only a few pixels. In this
setting, it is natural to seek a low-rank background matrix $\vB$ that minimizes
$\|\vX - \vB\|_1$ where $\vX$ is the frame-by-pixel video matrix, and the $\ell_1$
loss imposes a sparsity prior on the residual foreground.
Unlike the case of low-rank approximations in Frobenius or spectral norms, this
problem does not admit an SVD-like tractable solution.  The Robust Principal
Component Analysis (RPCA), considered state of the art for this application,
uses a nuclear-norm convex relaxation of the low-rank constraints. In this
paper, we instead recover tractability by imposing the separable NMF assumption
on the background matrix. This implies that the variability of pixels across the
frames can be "explained" in terms of observed variability in a small set
of anchor pixels. Under a more restrictive setting, this can be shown to be 
equivalent to median filtering on the video frames, while a full near-separable
NMF model imparts more degrees of freedom to model the background. We show that the
proposed near-separable NMF algorithms with $\ell_1$ loss are competitive with RPCA in
separating foreground from background while outperforming it in terms of computational efficiency.

Our algorithms are empirically shown to be robust to noise (deviations from the
pure separability assumption). In addition to the background-foreground problem,
we also demonstrate our algorithms on the exemplar selection problem. 
For identifying exemplars in a data set, the proposed algorithms 
are evaluated on text documents with classification accuracy as a performance metric and are shown
to outperform the recently proposed method of \citet{elhamifar.cvpr12}. 

{\bf Related Work:} Existing separable NMF methods work either with only a
limited number of loss functions on the factorization error 
such as Frobenius norm loss \cite{xray.icml13}, $\ell_{1,\infty}$ norm loss \cite{bittorf.12}, or maximize proxy criteria 
such as volume of the convex polyhedron with anchor columns as vertices \cite{gillis.12} and 
distance between successive anchors \cite{arora2012practical} to select the anchor columns. 
On the other hand, local search based NMF methods \cite{nmf:book} have been proposed for a wide variety of loss functions on
the factorization error including $\ell_1$ norm loss
\cite{manhattannmf.12,sparsenmf} and instances of Bregman divergence
\cite{bregmannmf.12,nmfbregman05}.
In this paper, we close this gap and propose algorithms for near-separable NMF that minimize $\ell_1$ loss and Bregman divergence 
for the factorization.  



\section{Geometric Intuition}
\label{sec:geom}
The goal in exact NMF is to find a matrix $\vW$ such that
the cone generated by its columns (i.e., their non-negative linear combinations)
contains all columns of $\vX$.
Under separability assumption, the columns of matrix $\vW$ are to be picked directly from $\vX$, also known as anchor columns. 
The algorithms in this paper build the cone incrementally by picking a column from $\vX$ in every iteration. The algorithms execute
$r$ such iterations for constructing a factorization of inner-dimension $r$. Figure~\ref{fig:cone2} shows the cone after three
iterations of the algorithm when three anchor columns have been identified. An extreme ray $\{t\vx : t>0\}$ is associated with 
every anchor point $\vx$. The points on an extreme ray cannot be expressed as
conic combinations of other points in the cone that do not themselves lie on that extreme ray. 
To identify the next anchor column, the algorithm
picks a point outside the current cone (a green point) and projects it to the current cone so that the distance between
the point and the projection is minimized in terms of the desired measure of
distance. This projection is then used to setup a specific simple criteria which
when maximized over the data points, identifies a new anchor. This new anchor 
is then added to the current set of anchors and the cone is expanded
iteratively until all anchors have been picked.

\begin{figure}[t]
\begin{center}
\includegraphics[clip=true,trim=0 2mm 0 2.5cm,height=4cm, width=0.7\linewidth]{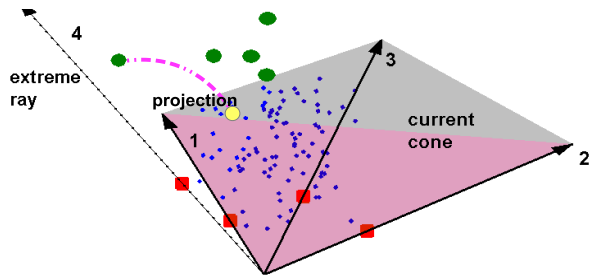}
\label{fig:cone2}
\vspace{-5mm}
\caption{Geometrical illustration of the algorithm} 
\end{center}
\vspace{-4mm}
\end{figure} 

These geometric intuitions are inspired by \citet{clarkson,dula} who present linear programming (LP) based algorithms for 
general convex and conical hull problems. Their algorithms use $\ell_2$ projections 
of exterior points to the current cone and are also applicable in our NMF setting if the data matrix
$\vX$ satisfies $r$-separability exactly. In this case, the $\ell_2$ projection and corresponding residual vector of {\it any} single exterior point 
can be used to expand the cone and {\it all} $r$ anchors will be recovered correctly at the end of the algorithm. 
When $\vX$ does not satisfy $r$-separability exactly, anchor selection criteria derived from {\it multiple} residuals demonstrate 
superior noise robustness as empirically shown by \citet{xray.icml13} who consider the case of Gaussian i.i.d. noise. 
However,  
the algorithms of \citet{xray.icml13} are not suitable for noise distributions other than Gaussian (e.g., other members of the exponential family, sparse noise)
as they minimize $\lVert \vX-\vX_A\vH\rVert_\fro^2$. In the 
following sections, we present algorithms for near-separable NMF that are targeted precisely towards this goal and empirically demonstrate 
their superiority over existing algorithms under different noise distributions. 


\section{Near-separable NMF with $\ell_1$ loss}
\label{sec:l1}
This section considers the case when the pure separable structure is perturbed by sparse noise. 
Hence our aim is to minimize $\lVert\vX-\vX_A\vH\rVert_1$ for $\vH\geq 0$ where $\lVert\cdot\rVert_1$ denotes element-wise
$\ell_1$ norm of the matrix and $\vX_A$ are the columns of $\vX$ indexed by set $A\subset\{1,2,\ldots,n\}$.
We denote $i$th column of $\vX$ by $\vX_i$. 
The proposed algorithm proceeds by identifying one anchor column in each iteration and adding it
to the current set of anchors, thus expanding the cone generated by anchors. Each iteration consists of two steps:
(i) \emph{anchor selection} step that finds the column of $\vX$ to be added as an anchor, and
(ii) a \emph{projection} step where all data points (columns of $\vX$) are projected to the current cone
in terms of minimizing the $\ell_1$ norm. 
Algorithm~\ref{alg:conicl1} outlines the steps of the proposed algorithm. 
 
{\bf Selection Step}: In the selection step, we normalize all the points to lie
on the hyperplane $\vp^T\vx=1$ ($\vY_j = \frac{\vX_j}{\vp^T\vX_j}$) for a strictly positive vector $\vp$ and evaluate the selection criterion of Eq.~\ref{eq:selection}
to select the next anchor column. Note that any exterior point ($i: \|\vR_i\|_1\geq 0$) can be used in the selection criterion
-- Algorithm~\ref{alg:conicl1} shows two possibilities for choosing the exterior point. Taking the point
with maximum residual $\ell_1$ norm to be the exterior point turns out to be far more robust to noise than randomly
choosing the exterior point, as observed in our numerical simulations. 
 
{\bf Projection Step}: The projection step, Eq.~\ref{eq:nnlad}, involves solving
a multivariate least absolute devitations problems with non-negativity
constraints. We we use alternating direction method of
multipliers (ADMM) \cite{boyd.admm} and reformulate the problem as $$\min_{B\geq
0,\vZ} \|\vZ\|_1,\text{ such that}~\vX_A\vB+\vZ=\vX.$$
Thus the non-negativity constraints are decoupled from the $\ell_1$ objective and the ADMM optimization
proceeds by alternating between two sub-problems -- a standard $\ell_1$ penalized $\ell_2$ proximity problem in variable $\vZ$ which has a closed
form solution using the soft-thresholding operator, and a non-negative least squares problem in variable $\vB$ that is solved using 
a cyclic coordinate descent approach (cf. Algorithm 2 in \cite{xray.icml13}). The standard primal and dual
residuals based criteria is used to declare convergence~\cite{boyd.admm}. The ADMM procedure converges to the global optimum
since the problem is convex. 

\def\conic{}
\begin{center}
\begin{algorithm}[tb]
\caption{Robust\xray: Near-separable NMF with $\ell_1$ loss}
\label{alg:conicl1}
\begin{algorithmic}
{\small 
\STATE {\bfseries Input:} $\vX \in \reals_+^{m\times n}$, inner dimension $r$ 
\STATE {\bfseries Output:} $\vW\in \reals_{m\times r}, \vH \in \reals_{r\times n}$, $r$ indices in $A$ \\
~~~~~~~~~~~~such that: $\vX = \vW \vH$, $\vW = \vX_A$ 
\STATE {\bfseries Initialize:} $\vR \gets \vX$, $\vD^\star \gets \vX$, $A\gets \{\}$
\WHILE{$|A|<r$} 
\STATE 1. {\it {\bfseries Anchor Selection step}:} 
\STATE First, pick any point exterior to the current cone. Two possible criteria are
\vspace{-2mm}
\begin{eqnarray}
\textrm{\textit{rand}}:  & ~~\textrm{any random}~~i : \|\vR_i\|_1>0\label{eq:rand}\\
\textrm{\textit{max}}:   & i = \argmax_k \|\vR_k\|_1\label{eq:max}
\end{eqnarray}
\vspace{-4mm}
\STATE Choose a suitable $\vD_i^\star\in\vD_i$ where $\vD_{ji} = sign(\vR_{ji})$ if $\vR_{ji}\neq 0$, else $\vD_{ji} \in [-1,1]$ (see Remark (1)).
\STATE  Select an anchor as follows ($\vp$ is a strictly positive vector, not collinear with $\vD_i^\star$ (see Remark (3))):
\vspace{-3mm}
\begin{equation}
j^\star = \argmax_j \frac{\vD_i^{\star^T} \vX_j}{\vp^T \vX_j}
\label{eq:selection}
\end{equation}
\vspace{-2mm}
\STATE 2. Update: $A\gets A\cup \{j^\star\}$ (see Remark (2))
\vspace{2mm}
\STATE 3. {\it {\bfseries Projection step}: Project onto current cone.} 
\begin{equation}\vH = \argmin_{\vB\geq 0}\|\vX - \vX_A \vB\|_1~~~~\text{(ADMM)}\label{eq:nnlad}\end{equation}
\vspace{-3mm}
\STATE 4. Update Residuals:  ~~ $\vR = \vX - \vX_A \vH$
\ENDWHILE
}
\end{algorithmic}
\end{algorithm}
\end{center}
We now show that Algorithm~\ref{alg:conicl1} correctly identifies all the
anchors in pure separble case.
\begin{lem}
Let $\vR$ be the residual matrix obtained after $\ell_1$ projection of columns of $\vX$ onto the current cone 
and $\vD$ be the set of matrices such $\vD_{ij}=sign(\vR_{ij})$ if $\vR_{ij}\neq 0$ else $\vD_{ij}\in[-1,1]$. 
Then, there exists at least one $\vD^\star\in\vD$ such that $\vD^{\star^T} \vX_A \leq 0$, where $\vX_A$ are anchor columns selected so far by Algorithm~\ref{alg:conicl1}.
\label{lem:kktnnlad}
\vspace{-3mm}
\end{lem}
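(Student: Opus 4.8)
The plan is to recognize the projection problem Eq.~\ref{eq:nnlad} as a convex program whose first-order optimality (KKT) conditions directly produce the claimed subgradient matrix. First I would observe that the objective decouples across the columns of $\vX$: since $\|\vX-\vX_A\vB\|_1 = \sum_j \|\vX_j - \vX_A\vB_j\|_1$ and the constraint $\vB\geq 0$ is separable, the projection splits into $n$ independent non-negative least absolute deviation problems $\vH_j = \argmin_{\vb\geq 0}\|\vX_j - \vX_A\vb\|_1$, one per column. It therefore suffices to produce, for each $j$, a valid subgradient vector $\vD_j^\star$ (i.e.\ with $i$th entry $\mathrm{sign}(\vR_{ij})$ when $\vR_{ij}\neq 0$ and in $[-1,1]$ otherwise) satisfying $\vX_A^\transpose \vD_j^\star \leq 0$; assembling these columns yields a matrix $\vD^\star\in\vD$ with $\vX_A^\transpose\vD^\star\leq 0$, which is the same as $\vD^{\star\transpose}\vX_A\leq 0$ up to transposition.

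Next I would write down the optimality conditions for a single column problem. Setting $g(\vb) = \|\vX_j - \vX_A\vb\|_1$ and $\vR_j = \vX_j - \vX_A\vH_j$, the subdifferential of the $\ell_1$ norm at $\vR_j$ is \emph{exactly} the set of vectors whose $i$th entry equals $\mathrm{sign}(\vR_{ij})$ when $\vR_{ij}\neq 0$ and lies in $[-1,1]$ otherwise, i.e.\ precisely the $j$th column of $\vD$. Because $\vb\mapsto \vX_j - \vX_A\vb$ is affine, the subdifferential chain rule applies without any qualification and gives $\partial g(\vb) = -\vX_A^\transpose\,\partial\|\cdot\|_1(\vX_j-\vX_A\vb)$. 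Introducing a multiplier $\bm{\mu}\geq 0$ for the constraint $\vb\geq 0$, stationarity of the Lagrangian $L(\vb,\bm{\mu})=g(\vb)-\bm{\mu}^\transpose\vb$ at the optimum $\vH_j$ requires $0\in \partial g(\vH_j)-\bm{\mu}$, so there exists $\vd\in\partial\|\cdot\|_1(\vR_j)$ with $-\vX_A^\transpose\vd = \bm{\mu}\geq 0$. Hence $\vX_A^\transpose\vd\leq 0$, and this $\vd$ is the desired $\vD_j^\star$.

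Finally I would collect the per-column subgradients into $\vD^\star$ and note that the inequality holds simultaneously in every column because the choices are independent across $j$, giving a single $\vD^\star\in\vD$ with $\vD^{\star\transpose}\vX_A\leq 0$.

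The step that needs care is the non-smooth optimality argument: I must justify that the KKT conditions are \emph{necessary} at the optimum and that the composed subdifferential is computed correctly. Both are clean here because the program is convex with an affine inner map and purely polyhedral (non-negativity) constraints, so no constraint qualification beyond polyhedrality is required and the chain rule through the affine map is exact; subgradient optimality of the unconstrained-direction condition is then both necessary and sufficient. The only remaining bookkeeping is tracking the matrix transpose so that $\vX_A^\transpose\vD^\star\leq 0$ is correctly restated as $\vD^{\star\transpose}\vX_A\leq 0$, exactly as in the statement.
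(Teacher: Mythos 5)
Your proof is correct and is essentially the same argument as the paper's: both derive the result from stationarity of the Lagrangian of the non-negative $\ell_1$ projection, where the subdifferential of the $\ell_1$ norm composed with the affine map $\vB \mapsto \vX - \vX_A\vB$ yields $-\vX_A^\transpose \vD$, and non-negativity of the multipliers forces $\vX_A^\transpose \vD^\star \leq 0$. Your column-by-column decomposition and the explicit justification that KKT conditions are necessary (via polyhedrality and the exact affine chain rule) are merely a more careful writeup of the same route, which the paper states directly in matrix form.
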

\begin{proof}
Residuals are given by $\vR = \vX - \vX_A \vH$, where $\vH = \argmin_{\vB\geq 0} \lVert \vX - \vX_A \vB\rVert_1$. \\
Forming the Lagrangian for Eq.~\ref{eq:nnlad}, we get
$\mcal{L}(\vB,\vLam) = \lVert \vX - \vX_A \vB\rVert_1 - tr(\vLam^T\vB)$, where the matrix $\vLam$ contains
the non-negative Lagrange multipliers.
The Lagrangian is not smooth everywhere and its sub-differential is given by $\partial\mcal{L}=-\vX_A^T\vD-\vLam$ where 
$\vD$ is as defined in the lemma.
At the optimum $\vB = \vH$, we have $0\in \partial\mcal{L}$
$\Rightarrow -\vLam\in\vX_A^T\vD$. Since $\vLam\geq 0$, this means that there exists at least one $\vD^\star\in\vD$
for which $\vD^{\star^T}\vX_A\leq 0$.
\end{proof}

\begin{lem}
For any point $\vX_i$ exterior to the current cone, there exists at least one $\vD^\star\in\vD$ such that it 
satisfies the previous lemma and $\vD_i^{\star^T} \vX_i > 0$.
\label{lem:extcone}
\vspace{-3mm}
\end{lem}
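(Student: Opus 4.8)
The plan is to exploit two facts: that the $\ell_1$ projection in Eq.~\ref{eq:nnlad} is separable across columns, and that the previous lemma already hands us a KKT-consistent sign matrix $\vD^\star$. First I would note that $\min_{\vB\geq 0}\|\vX-\vX_A\vB\|_1=\sum_i \min_{\vb\geq 0}\|\vX_i-\vX_A\vb\|_1$, so the $i$-th column satisfies $\vH_i=\argmin_{\vb\geq 0}\|\vX_i-\vX_A\vb\|_1$ on its own. The $\vD^\star$ produced in the proof of Lemma~\ref{lem:kktnnlad} is exactly the subgradient for which stationarity reads $\vX_A^T\vD^\star=-\vLam$ with $\vLam\geq 0$; restricted to column $i$ this is $\vX_A^T\vD_i^\star=-\vLam_i$ for a nonnegative multiplier vector $\vLam_i$, and, being a member of $\vD$, it has $(\vD_i^\star)_j=sign((\vR_i)_j)$ on the support of $\vR_i$. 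Because the columns decouple, this single $\vD^\star$ simultaneously satisfies the previous lemma and is free to be analyzed column-by-column, so there is no tension between the two requirements in the statement.

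Next I would decompose the exterior point using the residual identity $\vX_i=\vX_A\vH_i+\vR_i$ and expand
\begin{equation}
\vD_i^{\star^T}\vX_i=\vD_i^{\star^T}\vX_A\vH_i+\vD_i^{\star^T}\vR_i.
\end{equation}
The residual term is immediate: since $(\vD_i^\star)_j=sign((\vR_i)_j)$ wherever $(\vR_i)_j\neq 0$ and the zero entries contribute nothing, $\vD_i^{\star^T}\vR_i=\sum_j|(\vR_i)_j|=\|\vR_i\|_1$. The crux is the cross term. Using $\vX_A^T\vD_i^\star=-\vLam_i$ gives $\vD_i^{\star^T}\vX_A\vH_i=-\vLam_i^T\vH_i$, and I would invoke complementary slackness from the KKT conditions of the constraint $\vb\geq 0$, namely $(\vLam_i)_k(\vH_i)_k=0$ for every anchor index $k$, to conclude $\vLam_i^T\vH_i=0$. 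Hence the cross term vanishes and $\vD_i^{\star^T}\vX_i=\|\vR_i\|_1$.

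Finally, since $\vX_i$ is exterior to the current cone we have $\|\vR_i\|_1>0$, so $\vD_i^{\star^T}\vX_i=\|\vR_i\|_1>0$, which is the claim. I expect the main obstacle to be the cross term rather than the residual term: a priori $\vD_i^{\star^T}\vX_A\vH_i$ is only known to be $\leq 0$ (which alone would leave the sign of $\vD_i^{\star^T}\vX_i$ undetermined), and it is precisely complementary slackness --- a KKT ingredient implicit in the proof of Lemma~\ref{lem:kktnnlad} but not stated there explicitly --- that forces it to be exactly zero. The one point to state carefully is that the very $\vD^\star$ guaranteed by the previous lemma is the one for which this slackness holds, so the same matrix serves both conditions.
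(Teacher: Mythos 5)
Your proposal is correct and follows essentially the same route as the paper's own proof: both take the KKT-stationary subgradient $\vD^\star$ from Lemma~\ref{lem:kktnnlad}, decompose $\vD_i^{\star^T}\vX_i = \vD_i^{\star^T}\vX_A\vH_i + \vD_i^{\star^T}\vR_i$, kill the cross term via complementary slackness ($\vLam_i^T\vH_i = 0$), and identify the residual term with $\|\vR_i\|_1 > 0$. Your added observations --- that the $\ell_1$ projection decouples column-wise, and that the paper's phrase ``there exists at least one $\vD^\star$'' is really the specific subgradient certifying stationarity --- are accurate clarifications rather than a different argument.
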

\begin{proof}
Let $\vR = \vX - \vX_A \vH$, where $\vH = \argmin_{\vB\geq 0} \lVert \vX - \vX_A \vB\rVert_1$ and $\vX_A$ 
are the current set of anchors.
From the proof of previous lemma, $-\vLam^T\in\vD^T\vX_A$.
Hence, $-\vLam_i^T\in\vD_i^T\vX_A$ ($i$th row of both left and right side matrices). From the complementary
slackness condition, we have $\vLam_{ji}\vH_{ji} = 0\,\, \forall\, j,i$. Hence, $-\vLam_i^T\vH_i = 0\in\vD_i^T\vX_A\vH_i$. \\
Since all KKT conditions are met at the optimum, there is at least one $\vD^\star\in\vD$ that satisfies previous lemma
and for which $\vD_i^{\star^T}\vX_A\vH_i=0$. 
For this $\vD^\star$, we have $\vD_i^{\star^T} \vX_i = \vD_i^{\star^T} (\vR_i + \vX_A \vH_i) = \vD_i^{\star^T}\vR_i = \lVert\vR_i\rVert_1> 0$ 
since $\vR_i\neq 0$ for an exterior point. 
\end{proof}
Using the above two lemmas, we prove the following theorem regarding the correctness of 
Algorithm~\ref{alg:conicl1} in pure separable case.
\begin{thm}
If the maximizer in Eq.~\ref{eq:selection} is unique, the data point $\vX_{j^\star}$ added at each iteration in the Selection step of Algorithm~\ref{alg:conicl1}, is 
an anchor that has not been selected in one of the previous iterations.
\vspace{-3mm}
\end{thm}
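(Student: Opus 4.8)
The plan is to establish two separate facts about the selected column $\vX_{j^\star}$: that it is a genuine anchor (an extreme ray of the true cone), and that it is distinct from every anchor chosen in the earlier iterations. Throughout I would fix the direction $\vg := \vD_i^\star$ attached to the chosen exterior point $i$ --- namely the one guaranteed by Lemma~\ref{lem:extcone}, which simultaneously satisfies the conclusion of Lemma~\ref{lem:kktnnlad} --- and write the selection objective as $f(j) = \vg^\transpose \vY_j$ with $\vY_j = \vX_j/(\vp^\transpose\vX_j)$. Because $\vp$ is strictly positive and each $\vX_j \geq 0$ is nonzero we have $\vp^\transpose\vX_j > 0$, so the normalization and hence $f$ are well defined, and maximizing Eq.~\ref{eq:selection} is exactly maximizing $f$.

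First I would argue that $j^\star$ is new. Reading off the row indexed by $i$ from the matrix inequality $\vD^{\star^T}\vX_A \leq 0$ of Lemma~\ref{lem:kktnnlad} gives $\vg^\transpose \vX_a \leq 0$ for every previously selected anchor $a \in A$, hence $f(a) \leq 0$ on all of $A$. On the other hand Lemma~\ref{lem:extcone} gives $\vg^\transpose \vX_i > 0$, so $f(i) > 0$ and therefore $\max_j f(j) > 0$. Since the optimal value is strictly positive whereas $f$ is nonpositive on $A$, the maximizer $j^\star$ cannot coincide with any column already in $A$.

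The second and more substantial step is to show $j^\star$ is an anchor, and here I would invoke pure separability. Every column can be written as $\vX_j = \sum_{a \in A^\star}\vH^\star_{aj}\vX_a$ with $\vH^\star_{aj}\geq 0$, where $A^\star$ is the full set of true anchors. A short normalization computation rewrites this as $\vY_j = \sum_{a\in A^\star}\mu_{aj}\vY_a$ with $\mu_{aj} = \vH^\star_{aj}(\vp^\transpose\vX_a)/(\vp^\transpose\vX_j)\geq 0$ and $\sum_a \mu_{aj}=1$; that is, each normalized point is a convex combination of the normalized anchors. Applying the linear functional $\vg$ then gives $f(j) = \sum_a \mu_{aj} f(a) \leq \max_{a\in A^\star} f(a)$, so the global maximum of $f$ over all columns is attained at some true anchor. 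The uniqueness hypothesis closes the argument: that maximizing anchor is then the unique $\argmax$ and hence equals $j^\star$, so $j^\star \in A^\star$.

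I expect the main obstacle to be precisely this last step and the role of uniqueness. The convex-combination identity only certifies that the maximum is achieved at some anchor; without uniqueness a non-anchor column lying on a maximizing face --- a convex combination of several tied anchors --- could attain the same value, and the selection rule would not be forced to return an extreme ray. The delicate point is thus to verify that uniqueness of the $\argmax$ rules out these degenerate faces and localizes $j^\star$ onto a single extreme ray. By comparison, the sign bookkeeping in the first step, which just combines the two KKT-based lemmas, is routine.
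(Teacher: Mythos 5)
Your proposal is correct and follows essentially the same route as the paper's proof: the same normalization $\vY_j = \vX_j/(\vp^T\vX_j)$ onto the hyperplane $\vp^T\vx=1$, the same use of Lemma~\ref{lem:kktnnlad} and Lemma~\ref{lem:extcone} to show the maximizer lies outside the current anchor set, and the same convex-combination identity $\vY_j = \vY_A\vC_j$ with $\mathbf{1}^T\vC_j=1$, $\vC_j\geq 0$ to localize the maximum at a true anchor, with uniqueness closing the argument. The only difference is organizational (you prove newness and anchor-hood as two separate facts rather than as one chained inequality), and your closing observation about why uniqueness is needed matches the degenerate-tie issue the paper handles in its Remark (2).
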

\begin{proof}
Let the index set $A$ denote all the anchor columns of $\vX$. Under the separability assumption, we have $\vX = \vX_A \vH$. 
Let the index set $A^t$ identify the current set of anchors.

Let $\vY_j = \frac{\vX_j}{\vp^T\vX_j}$ and $\vY_A = \vX_A [diag(\vp^T\vX_A)]^{-1}$ (since $\vp$ is strictly positive, 
the inverse exists).
Hence $\vY_j =$ $\vY_A \frac{[diag(\vp^T\vX_A)] \vH_j}{\vp^T\vX_j}$. Let $\vC_j = \frac{[diag(\vp^T\vX_A)] \vH_j}{\vp^T\vX_j}$. 
We also have $\vp^T\vY_j = 1$ and $\vp^T\vY_A = \mbf{1}^T$. Hence, we have $1 = \vp^T\vY_j = \vp^T\vY_A \vC_j = \mbf{1}^T \vC_j$. 

Using Lemma~\ref{lem:kktnnlad}, Lemma~\ref{lem:extcone} and the fact that $\vp$ is strictly positive, we have 
$\max_{1\leq j\leq n} \vD_i^{\star^T} \vY_j = \max_{j\notin A^t} \vD_i^{\star^T} \vY_j$. Indeed, for all $j\in A^t$ we have
$\vD_i^{\star^T} \vY_j \leq 0$ using Lemma~\ref{lem:kktnnlad} and there is at least one $j=i\notin A^t$ for which
$\vD_i^{\star^T} \vY_j > 0$ using Lemma~\ref{lem:extcone}. Hence the maximum lies in the set $\{j: j \notin A^t\}$.

Further, we have $\max_{j\notin A^t} \vD_i^{\star^T} \vY_j = \max_{j\notin A^t} \vD_i^{\star^T} \vY_A \vC_j 
\leq \max_{j\in (A \setminus A^t)} \vD_i^{\star^T} \vY_j$. The second inequality is the result of the fact that
$\lVert \vC_j\rVert_1 =1$ and $\vC_j\geq 0$. This implies that if there is a unique maximum at a
$j^* = \argmax_{j\notin A^t}\vD_i^{\star^T} \vY_j$, then $\vX_{j^*}$ is an anchor that has not been selected so far.
\end{proof}

\vspace{-2mm}
\noindent {\bf Remarks}: 

(1) For the correctness of Algorithm~\ref{alg:conicl1}, the anchor selection step requires choosing a $\vD_i^\star\in\vD_i$ for which
Lemma~\ref{lem:kktnnlad} and Lemma~\ref{lem:extcone} hold true. 
Here we give a method to find one such $\vD_i^\star$ using linear programming. Using KKT conditions, the $\vD_i^\star$ 
satisfying $-\vX_A^T\vD_i^\star=\vLambda_i\in\reals_+^{|A|}$ is a candidate. We know $\vLambda_{ji}=0$ if $\vH_{ji}>0$ 
and $\vLambda_{ji}>0$ if $\vH_{ji}=0$ (complementary slackness). Let $Z=\{j:\vH_{ji}>0\}$ and $\widetilde{Z}=\{j:\vH_{ji}=0\}$. 
Let $I=\{j:\vR_{ji}=0\}$.  
 Let $\vu$ represent the elements of $\vD_i^\star$ that we need to find, i.e., $\vu=\{\vD_{ji}^\star : j\in I\}$. 
Finding $\vu$ is a feasibility problem that can be solved using an LP. Since there can be
multiple feasible points, we can choose a dummy cost function $\sum_k \vu_k$ (or any other random linear function of $\vu$) 
for the LP. More formally, the LP takes the form: 
\begin{align*}
\vspace{-3mm}
& ~~~~~~~~~~ \min_{-1\leq\vu\leq 1} \mbf{1}^T\vu, ~~~\text{such that} \\
& -\vX_A^T\vD_i^\star=\vLambda_i, \vLambda_{ji}=0~\forall~j\in Z, \vLambda_{ji}>0~\forall~j\in \widetilde{Z}
\vspace{-3mm}
\end{align*}
In principle, the number of variables in this LP is the number of zero entries in residual vector $\vR_i$ which can be as large as $m-1$.
In practice, we always have the number of zeros in $\vR_i$ much less than $m$ since we always pick the exterior point 
with maximum $\ell_1$ norm in the Anchor Selection step of Algorithm~\ref{alg:conicl1}. The number of constraints in the LP 
is also very small ($= |A| <r)$. In our implementation, we simply set $\vu=-1$
which, in practice,  almost always satisfies Lemma~\ref{lem:kktnnlad} and
Lemma~\ref{lem:extcone}. The LP is called whenever
Lemma~\ref{lem:extcone} is violated which happens rarely (note that Lemma~\ref{lem:kktnnlad} will never violate with this setting of $\vu$).

(2) If the maximum of Eq.~\ref{eq:selection} occurs at two points $j_1^*$ and $j_2^*$, both these points $\vX_{j_1^*}$ and $\vX_{j_2^*}$
generate the extreme rays of the data cone. Hence both are added to anchor set $A$. 
If the maximum occurs at more than two points, 
some of these are the anchors and others are conic combinations of these anchors.
We can identify the anchors of this subset of points by calling Algorithm~\ref{alg:conicl1} recursively.

(3) In Algorithm~\ref{alg:conicl1}, the vector $\vp$ needs to satisfy $\vp^T \vx_i >0, i=1\ldots n$. In our implementation, we simply used $\vp = \mbf{1}+\vdelta \in \reals^m$ where $\vdelta$ is small perturbation vector with entries i.i.d. according to a uniform distribution $\mcal{U}(0,10^{-5})$. This is done to avoid 
the possibility of $\vp$ being collinear with $\vD_i^{\star}$.

\section{Near-separable NMF with Bregman divergence}
\label{sec:bregman}
Let $\phi:\cS\mapsto\mbb{R}$ be a strictly convex function on domain $\cS\subseteq\mbb{R}$
which is differentiable on its non-empty relative interior $ri(\cS)$.  
Bregman divergence is then defined as $D_\phi(x,y) = \phi(x)-\phi(y)-\phi'(y)(x-y)$
where $\phi'(y)$ is the continuous first derivative of $\phi(\cdot)$ at $y$. 
Here we will also assume $\phi'(\cdot)$ to be smooth which is true for most Bregman divergences 
of interest. 
A Bregman divergence is always convex in the first argument. Some instances of Bregman divergence are also
convex in the second argument (e.g., KL divergence). 
For two matrices $\vX$ and $\vY$, we work with divergence of the form $D_\phi(\vX,\vY):=\sum_{ij}D_\phi(\vX_{ij},\vY_{ij})$.  

Here we consider the case when the entries of data matrix $\vX$ are generated from an exponential family distribution
with parameters satisfying the separability assumption, i.e., 
$\vX_{ij}\sim \cp_\phi(\vW^i\vH_j),~\vW\in\reals_+^{m\times r},\vH=[\vI\,\,\vH']\in\reals_+^{r\times n}$
($\vW^i$ and $\vH_j$ denote the $i$th row of $\vW$ and the $j$th column of $\vH$, respectively). 
Every member distribution $\cp_\phi$ of the exponential family has a unique Bregman divergence $D_\phi(\cdot,\cdot)$ associated 
with it~\cite{clusterbregman05},
and solving $\min_\vY D_\phi(\vX,\vY)$ is equivalent to maximum likelihood estimation for parameters $\vY_{ij}$ of the
distribution $\cp_\phi(\vY_{ij})$. 
Hence, the projection step in Algorithm~\ref{alg:conicl1} is changed to $\vH = \argmin_{\vB\geq 0} D_\phi(\vX,\vX_A\vB)$.
We use the coordinate descent based method of \citet{bregmannmf.12} to solve the projection step. 
To select the anchor columns with Bregman projections $\vX_A\vH$, we modify the selection criteria as 
\begin{align}
j^\star = \argmax_j \frac{(\phi''(\vX_A\vH_i)\odot\vR_i)^T\vX_j}{\vp^T \vX_j}
\label{eq:selbreg}
\end{align} 
for any $i:\|\vR_i\|>0$, where $\vR=\vX-\vX_A\vH$ and $\phi''(\vx)$ is the vector of second derivatives of $\phi(\cdot)$ evaluated
at individual elements of the vector $\vx$ (i.e., $[\phi''(\vx)]_j=\phi''(\vx_j)$),
and $\odot$ denotes element-wise product of vectors. We can show the following result
regarding the anchor selection property of this criteria. Recall that an anchor is a column that can not be expressed 
as conic combination of other columns in $\vX$.
\begin{thm}
If the maximizer of Eq.~\ref{eq:selbreg} is unique, the data point $\vX_{j^\star}$ added at each iteration in the Selection step, is 
an anchor that has not been selected in one of the previous iterations.
\vspace{-3mm}
\end{thm}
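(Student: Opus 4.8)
The plan is to mirror the proof of the $\ell_1$ theorem, replacing the subgradient direction $\vD_i^\star$ by the gradient-weighted residual $\vg_i := \phi''(\vX_A\vH_i)\odot\vR_i$ that appears in Eq.~\ref{eq:selbreg}, where $\vR_i = \vX_i - \vX_A\vH_i$. Because the Bregman objective $D_\phi(\vX,\vX_A\vB)$ is differentiable in its second argument (we assumed $\phi'$, hence $\phi''$, smooth), this direction is \emph{unique} and no subgradient-selection or LP machinery is needed; the two lemmas used in the $\ell_1$ case collapse into direct KKT computations on a smooth problem.

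First I would write the projection step column-wise, $\vH_i = \argmin_{\vB\geq 0} D_\phi(\vX_i,\vX_A\vB)$, and compute the gradient. Using $\partial_y D_\phi(x,y) = -\phi''(y)(x-y)$ together with the chain rule through $\vX_A\vH_i$, one obtains $\nabla_{\vB} D_\phi(\vX_i,\vX_A\vB)\big|_{\vB=\vH_i} = -\vX_A^T\big(\phi''(\vX_A\vH_i)\odot\vR_i\big) = -\vX_A^T\vg_i$. Forming the Lagrangian with non-negative multipliers $\vlam_i\geq 0$ for the constraint $\vB\geq 0$, stationarity gives $-\vX_A^T\vg_i = \vlam_i\geq 0$, i.e.\ the first key fact $\vg_i^T\vX_A \leq 0$ (the analogue of Lemma~\ref{lem:kktnnlad}); complementary slackness $\vlam_i^T\vH_i = 0$ then yields $\vg_i^T\vX_A\vH_i = 0$.

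Next I would establish the exterior-point inequality (the analogue of Lemma~\ref{lem:extcone}). Writing $\vX_i = \vR_i + \vX_A\vH_i$ and using $\vg_i^T\vX_A\vH_i = 0$,
\[
\vg_i^T\vX_i = \vg_i^T\vR_i = \sum_k \phi''\!\big((\vX_A\vH_i)_k\big)\,\vR_{ki}^2 > 0 ,
\]
where strict positivity holds because $\phi$ is strictly convex (so $\phi''>0$) and $\vR_i\neq 0$ for any point exterior to the current cone. This is the \emph{crux} of the argument and the step I expect to be the main obstacle: it is precisely the $\phi''$ weighting built into Eq.~\ref{eq:selbreg} that converts the Bregman gradient into a positive weighted quadratic form on the residual, making $\vg_i^T\vX_i>0$. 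The care needed here is in checking that $\phi''>0$ on the relevant range for the Bregman divergences of interest and in handling domain/boundary points where $\phi''$ may degenerate or blow up.

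Finally I would replicate the normalization argument verbatim with $\vg_i$ in place of $\vD_i^\star$. Under separability $\vX=\vX_A\vH$ for the full anchor set $A$, set $\vY_j=\vX_j/(\vp^T\vX_j)$ and $\vY_A=\vX_A[diag(\vp^T\vX_A)]^{-1}$, so that $\vY_j=\vY_A\vC_j$ with $\vC_j\geq 0$ and $\mbf{1}^T\vC_j=1$, and since $\vp>0$ the criterion of Eq.~\ref{eq:selbreg} is exactly $\argmax_j \vg_i^T\vY_j$. The two facts above give $\vg_i^T\vY_j\leq 0$ for every current anchor $j\in A^t$ and $\vg_i^T\vY_i>0$, so the maximum is attained off $A^t$. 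Then each $\vg_i^T\vY_j=\vg_i^T\vY_A\vC_j$ is a convex combination of $\{\vg_i^T\vY_a\}_{a\in A}$; as the terms from $A^t$ are non-positive while the maximum is positive, it must coincide with $\max_{a\in A\setminus A^t}\vg_i^T\vY_a$. Hence a unique maximizer $j^\star$ lies in $A\setminus A^t$, i.e.\ it is an anchor not selected in a previous iteration.
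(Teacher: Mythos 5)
Your proposal is correct and follows essentially the same route as the paper's own proof: the same KKT stationarity computation giving $(\phi''(\vX_A\vH_i)\odot\vR_i)^T\vX_A\leq 0$, the same complementary-slackness step giving $(\phi''(\vX_A\vH_i)\odot\vR_i)^T\vX_A\vH_i=0$, the same weighted-quadratic-form positivity $\sum_k\phi''((\vX_A\vH_i)_k)\vR_{ki}^2>0$ for exterior points, and the same normalization/conic-combination argument recycled from the $\ell_1$ theorem with $\vD_i^\star:=\phi''(\vX_A\vH_i)\odot\vR_i$. The only caveat is your parenthetical ``strictly convex (so $\phi''>0$)'': strict convexity does not imply $\phi''>0$ everywhere, which is why the paper instead imposes this as an explicit assumption --- a point you correctly flag yourself as needing care.
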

The proof is provided in the Appendix. 
Again, any exterior point $i$ can be chosen to select the next anchor 
but our simulations show that taking exterior point to be $i = \argmax_k D_\phi(\vX_k,\vX_A\vH_k)$ gives much better performance
under noise than randomly choosing the exterior point.
Note that for the Bregman divergence induced by function $\phi(x)=x^2$, the selection criteria 
of Eq.~\ref{eq:selbreg} reduces to the selection criteria of \xray~ proposed in \cite{xray.icml13}.

Since Bregman divergence is not generally symmetric, it is also possible to have the projection step as
$\vH = \argmin_{\vB\geq 0} D_\phi(\vX_A\vB,\vX)$. In this case, the selection criteria will change to
$j^\star = \argmax_j \frac{(\phi'(\vX_i)-\phi'(\vX_A\vH_i))^T \vX_j}{\vp^T \vX_j}$ for any point $i$ exterior to the current cone,
where $\phi'(\vx)$ operates element-wise on vector $\vx$.
However this variant does not have as meaningful a probabilistic interpretation
as the one discussed earlier. 

\section{Empirical Observations}
\label{sec:synth}
In this section, we present experiments on synthetic and real datasets to demonstrate the
effectiveness of the proposed algorithms under noisy conditions. 
In addition to comparing our algorithms with existing separable NMF methods
\cite{bittorf.12,gillis.12,xray.icml13}, we also benchmark them against
Robust PCA and local-search based low-rank factorization methods, wherever applicable, for 
the sake of providing a more complete picture. 

\begin{figure}[t]
\centering
\includegraphics[clip=true,trim=0 0 0 0,width=0.4\linewidth,height=5cm]{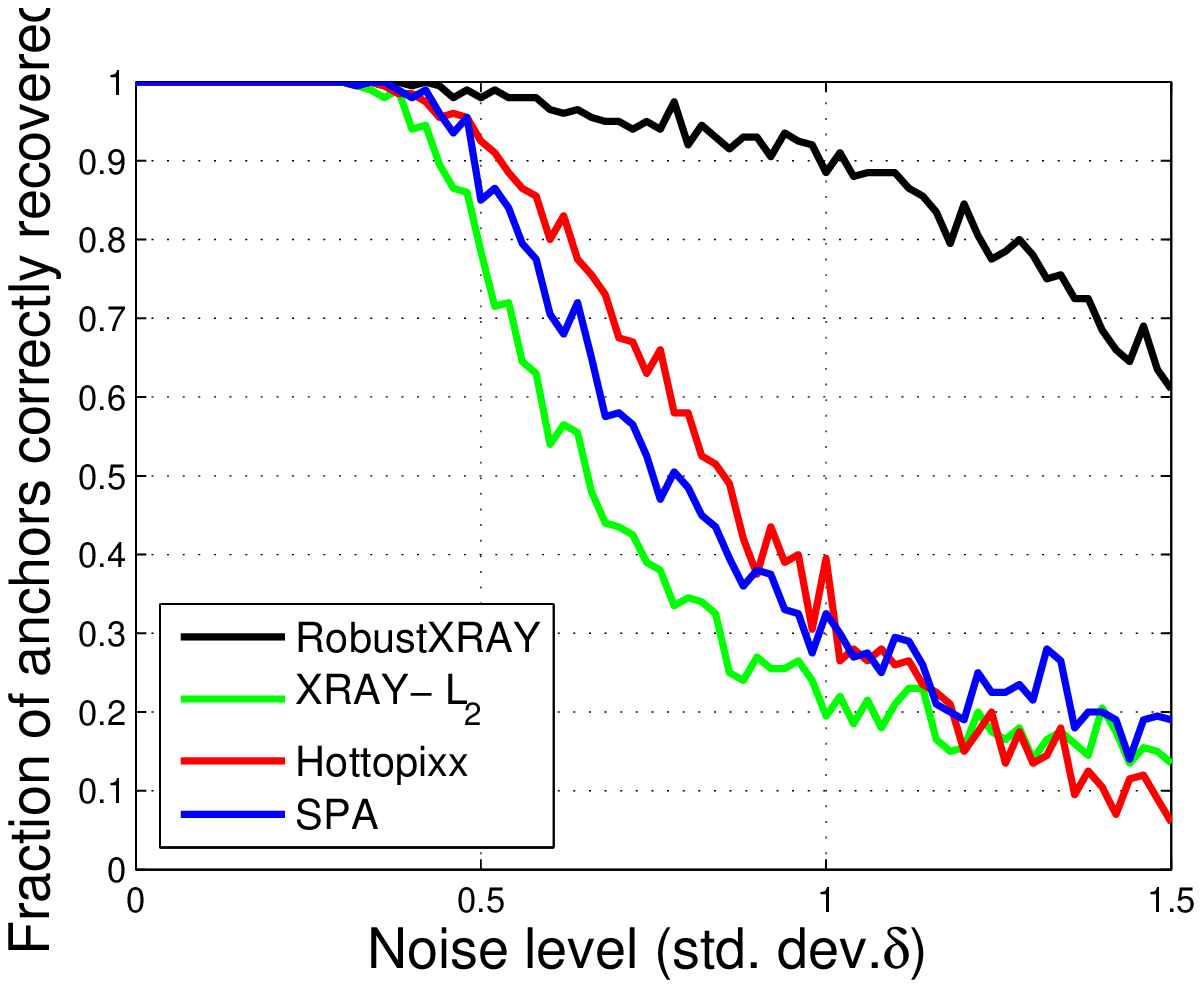}  
\includegraphics[clip=true,trim=0 0 0 0,width=0.4\linewidth,height=5cm]{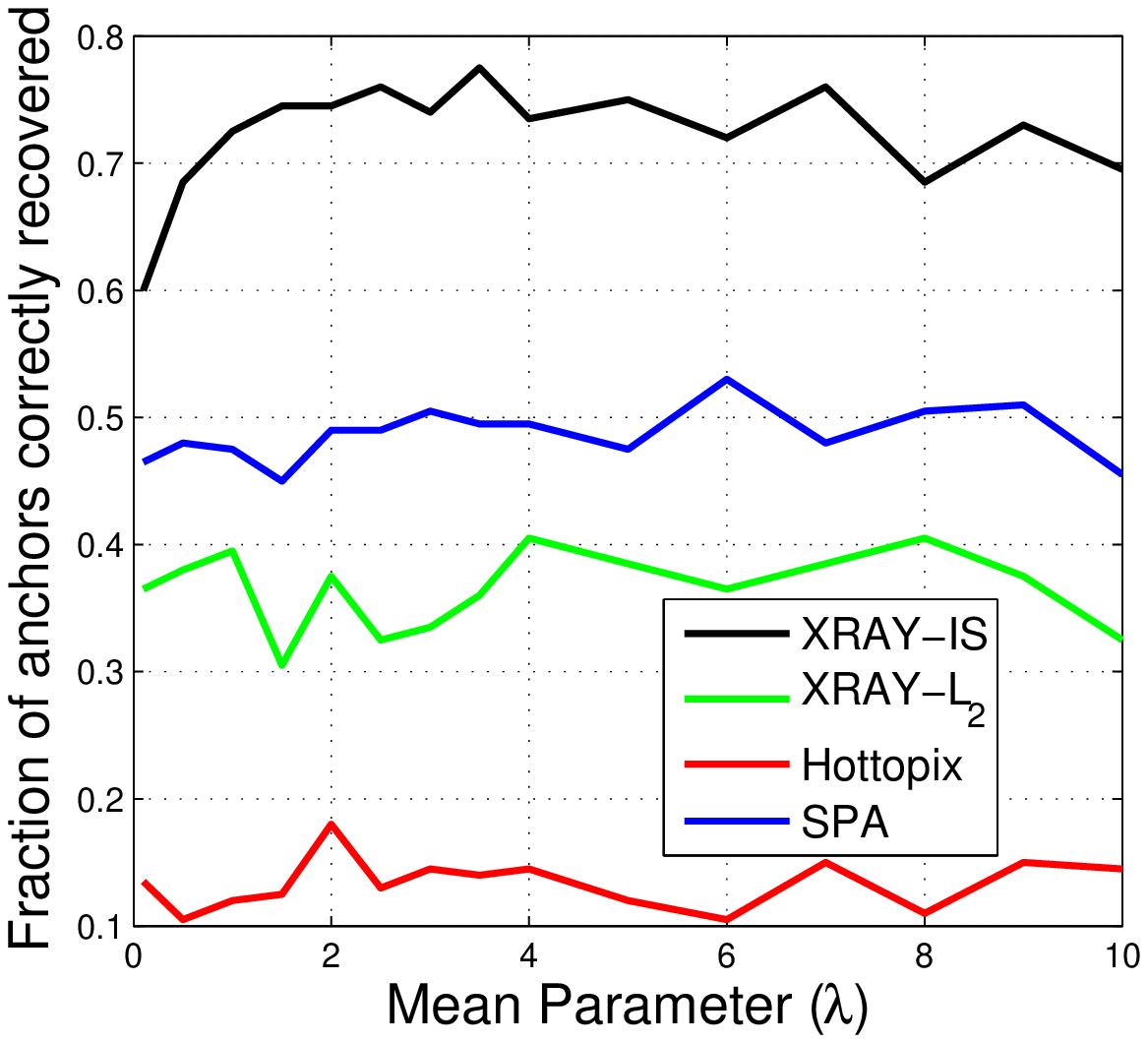}  
\vspace{-3mm}
\caption{\emph{\bf Left:} Anchor recovery rate versus noise level for sparse noise case, \emph{\bf Right:} 
Anchor recovery rate versus mean parameter for data matrix generated from exponential distribution (best viewed in color)}
\label{fig:synthl1is}
\end{figure}


\subsection{Anchor recovery under noise}
Here we test the proposed algorithms for recovery of anchors when the separable
structure is perturbed by noise. We compare with methods proposed in \citet{gillis.12}
(abbrv. as SPA for Successive Projection Approximation), \citet{bittorf.12} (abbrv. as \hott)
and \citet{xray.icml13} (abbrv. as \xrayl2). 

First, we consider the case when the separable structure is perturbed by addition of a sparse
noise matrix, i.e., $\vX=\vW\vH+\vN,~\vH=[\vI\,\,\vH']$. Each entry of matrix $\vW\in\reals_+^{200\times 20}$ is
generated i.i.d. from a uniform distribution between $0$ and $1$. The matrix $\vH\in\reals^{20\times 210}$ 
is taken to be $[\vI_{20\times 20}\,\,\vH'_{20\times 190}]$ where each column of $\vH'$ is sampled i.i.d. from a
Dirichlet distribution whose parameters are generated i.i.d. from a uniform distribution between $0$ and $1$.
It is clear from the structure of matrix $\vH$ that first twenty columns are the anchors.
The data matrix $\vX$ is generated as $\vW\vH+\vN$ with $\vN=\max(\vN_1,0)\in\reals_+^{200\times 210}$,
where each entry of $\vN_1$ is generated i.i.d. from a Laplace distribution having zero mean and $\delta$
standard deviation. Since Laplace distribution is symmetric around mean, almost half of the entries
in matrix $\vN$ are $0$ due to the $\max$ operation. The std. dev. $\delta$ is varied from $0$ to $1.5$ with a step size 
of $0.02$. Fig.~\ref{fig:synthl1is} plots the fraction of correctly recovered anchors averaged over $10$ runs for each 
value of $\delta$. The proposed Robust\xray~ (Algorithm~\ref{alg:conicl1}) outperforms all other methods
including \xrayl2 by a huge margin as the noise level increases. This highlights the importance of 
using the right loss function in the projection step that is suitable for the noise model (in this case $\ell_1$ loss of Eq.~\ref{eq:nnlad}).

Next, we consider the case where the non-negative data matrix is generated from an exponential family
distribution other than the Gaussian, i.e., 
$\vX_{ij}\sim \cp_\phi(\vW^i\vH_j),~\vW\in\reals_+^{m\times r},\vH=[\vI\,\,\vH']\in\reals_+^{r\times n}$
($\vW^i$ and $\vH_j$ denote the $i$th row of $\vW$ and the $j$th column of $\vH$, respectively). 
As mentioned earlier,
every member distribution $\cp_\phi$ of the exponential family has a unique Bregman divergence $D_\phi$ associated with it. 
Hence we minimize the corresponding Bregman divergence in the projection step of the algorithm
as discussed in Section~\ref{sec:bregman}, to recover the anchor columns. 
Two most commonly used Bregman divergences are generalized KL-divergence and Itakura-Saito (IS) divergence~\cite{nmfbregman05,clusterbregman05,nmfIS09}
that correspond to Poisson and Exponential distributions, respectively. We do not report results with
generalized KL-divergence here since they were not very informative in highlighting the differences among various algorithms
that are considered. The reason is that Poisson distribution with parameter $\lambda$ has a mean of $\lambda$ and std. dev. of $\sqrt{\lambda}$,
and increasing the noise (std. dev.) actually increases the signal to noise ratio\footnote{Poisson distribution with parameter $\lambda$ 
closely resembles a Gaussian distribution with mean $\lambda$ and std. dev. $\sqrt{\lambda}$, for large values of $\lambda$.}. Hence anchor recovery gets
better with increasing $\lambda$ (perfect recovery after certain value) 
and almost all algorithms perform as well as \xraykl for the full $\lambda$ range.
The anchor recovery results with IS-divergence are shown in Fig.~\ref{fig:synthl1is}. 
The entries of data matrix $\vX$ are generated as 
$\vX_{ij}\sim\exp(\lambda\vW^i\vH_j)~\vW\in\reals_+^{200\times 20},\vH=[\vI_{20\times 20}\,\,\vH']\in\reals_+^{20\times 210}$.
The matrices $\vW$ and $\vH$ are generated as described in the previous paragraph. The parameter $\lambda$ is varied from 
$0$ to $10$ in the steps of $0.5$ and we report the average over $10$ runs for each value of $\lambda$. 
The \xrayis algorithm significantly outperforms other methods including \xrayl2~\cite{xray.icml13} in correctly recovering the 
anchor column indices. The recovery rate does not change much with increasing $\lambda$ since exponential distribution with
mean parameter $\lambda$ has a std. dev. of $\lambda$ and the signal-to-noise ratio practically stays almost same with varying $\lambda$.

\begin{figure}[t]
\centering
\includegraphics[clip=true,trim=0 0 0 0,width=0.4\linewidth,height=5cm]{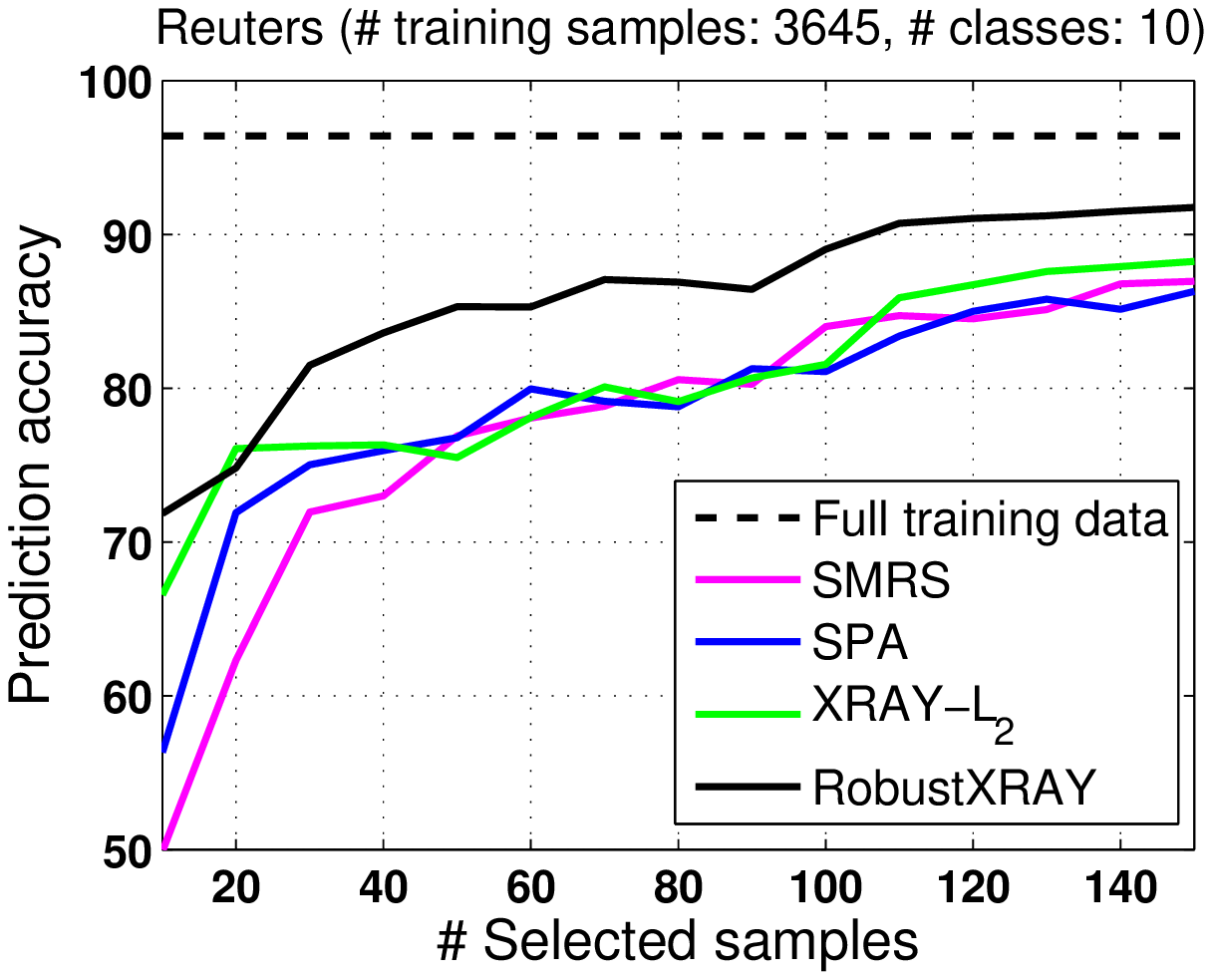}  
\includegraphics[clip=true,trim=0 0 0 0,width=0.4\linewidth,height=5cm]{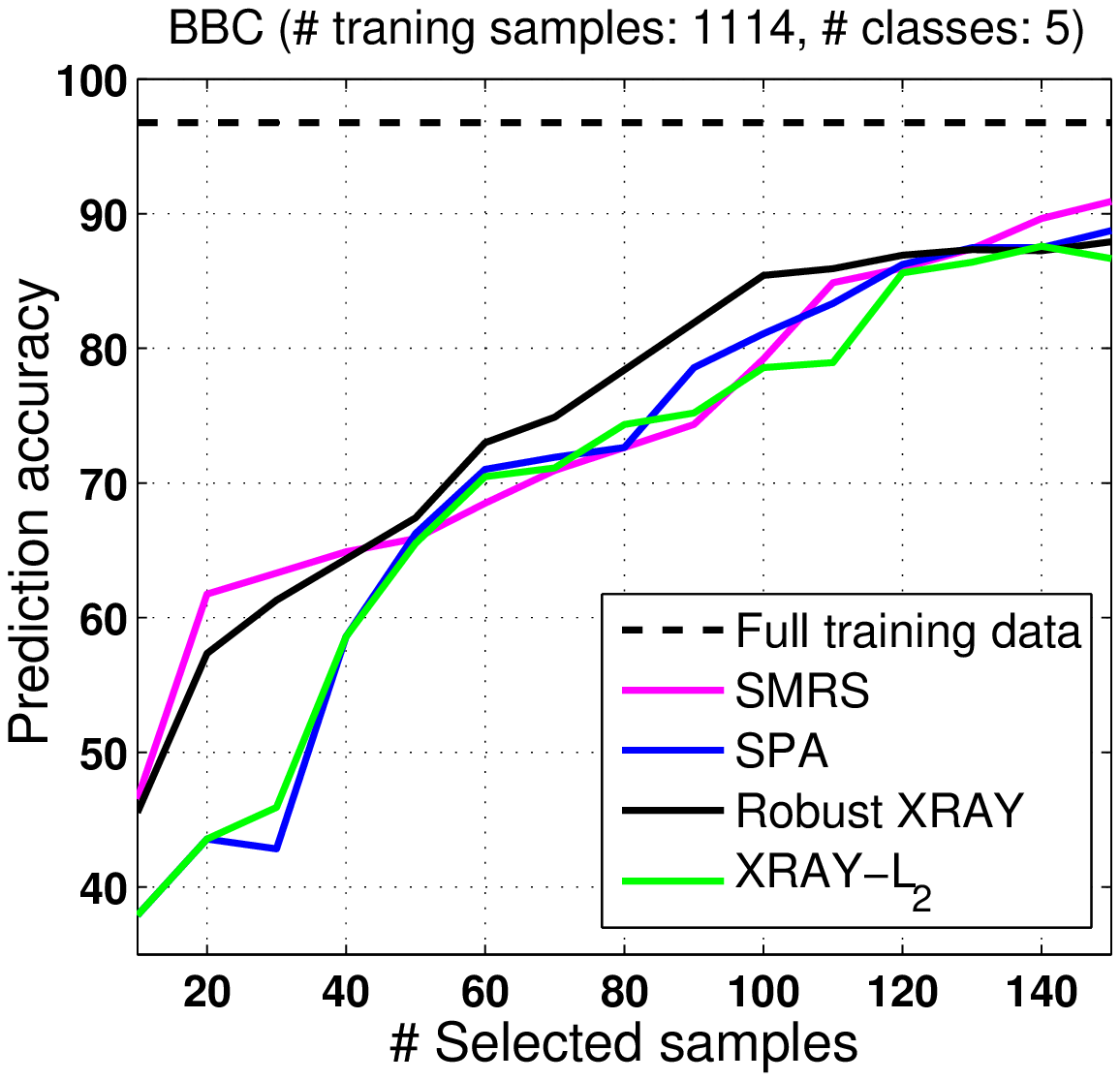}  
\vspace{-3mm}
\caption{Accuracy of SVM trained with selected exemplars on \emph{Reuters data (left)} and \emph{BBC data (right)} (best viewed in color)}
\label{fig:exemp.reutbbc}
\end{figure}


\subsection{Exemplar Selection}
\label{sec:exemplar}
The problem of exemplar selection is concerned with finding a few representatives from a dataset that can summarize the 
dataset well. Exemplar selection can be used in many applications including summarizing a video sequence, selecting representative images or
text documents (e.g., tweets) from a collection, etc. 
If $\vX$ denotes the data matrix where each \emph{column} is a data point, the exemplar selection problem translates to
selecting a few columns from $\vX$ that can act as representatives for all the columns. The separable NMF algorithms can be used for 
this task, working under the assumption that all data points (columns of $\vX$) can be expressed as non-negative linear combinations 
of the exemplars (the anchor columns). To be able to compare the quality of the selected exemplars by different algorithms in an objective manner,
we test them on a classification task (assuming that every data point has an associated label). 
We randomly partition the data in training and test sets, and use only training set in selecting the exemplars. 
We train a multiclass SVM classifier~\cite{liblinear} with the selected exemplars and look at its accuracy on the held-out test set. 
The accuracy of the classifier trained with the full training set is taken as a benchmark and is also reported. 
We also compare with \citet{elhamifar.cvpr12} who recently proposed a method for exemplar selection, named as
Sparse Modeling Representative Selection (SMRS). 
They assume that the data points can be expressed as a convex linear combination
of the exemplars and minimize $\lVert\vX-\vX\vC\rVert_\fro^2 + \lambda\lVert\vC\rVert_{1,2}$ s.t. $\mbf{1}^T\vC=\mbf{1}^T$. 
The columns of $\vX$ corresponding to the non-zero rows of $\vC$ are selected as exemplars. 
We use the code provided by the authors for SMRS. 
There are multiple possibilities for anchor selection criteria in 
the proposed Robust\xray~ and \xrayl2~\cite{xray.icml13}
and we use \emph{max} criterion for both the algorithms. 

We report results with two text datasets: Reuters~\cite{reuters} and BBC~\cite{bbc.greene06icml}. 
We use a subset of Reuters data corresponding to the most frequent $10$ classes which amounts to 7285 documents
and 18221 words ($\vX\in\reals_+^{18221\times 7285}$). The BBC data consists of 2225 documents and 9635 words
with 5 classes ($\vX\in\reals_+^{9635\times 2225}$). For both datasets, we evenly split the documents into training and test set, and select
the exemplars from the training set using various algorithms. 
Fig.~\ref{fig:exemp.reutbbc} and Fig.~\ref{fig:exemp.reutbbc} show the plot of SVM accuracy on the test set against the number of selected exemplars
that are used for training the classifier. The number of selected anchors is varied from 10 to 150 in the steps of 10.
The accuracy using the full training set is also shown (dotted black line). 
For Reuters data, the proposed Robust\xray~ algorithm outperforms other methods by a significant margin
for the whole range of selected anchors. All methods seem to perform comparably on BBC data. 
An advantage of SPA and \xray~ family of methods is that there is no need for a cleaning step to remove 
near-duplicate exemplars as needed in SMRS~\cite{elhamifar.cvpr12}. Another advantage is of computational speed --
in all our experiments, SPA and \xray~ methods are about 3--10 times faster than SMRS. 
It is remarkable that even 
a low number of selected exemplars give reasonable classification accuracy for all methods -- SMRS gives $50\%$ accuracy for Reuters data using
$10$ exemplars (on average 1 training sample per class) while Robust\xray~ gives more than 70\%. 

\begin{figure*}[ht]
\newlength{\figwidthsvm}
\setlength{\figwidthsvm}{0.33\textwidth}
\begin{center}
\hspace{-5mm}\begin{minipage}{0.33\textwidth}
\centering
\includegraphics[width=1.1\columnwidth, height=5cm]{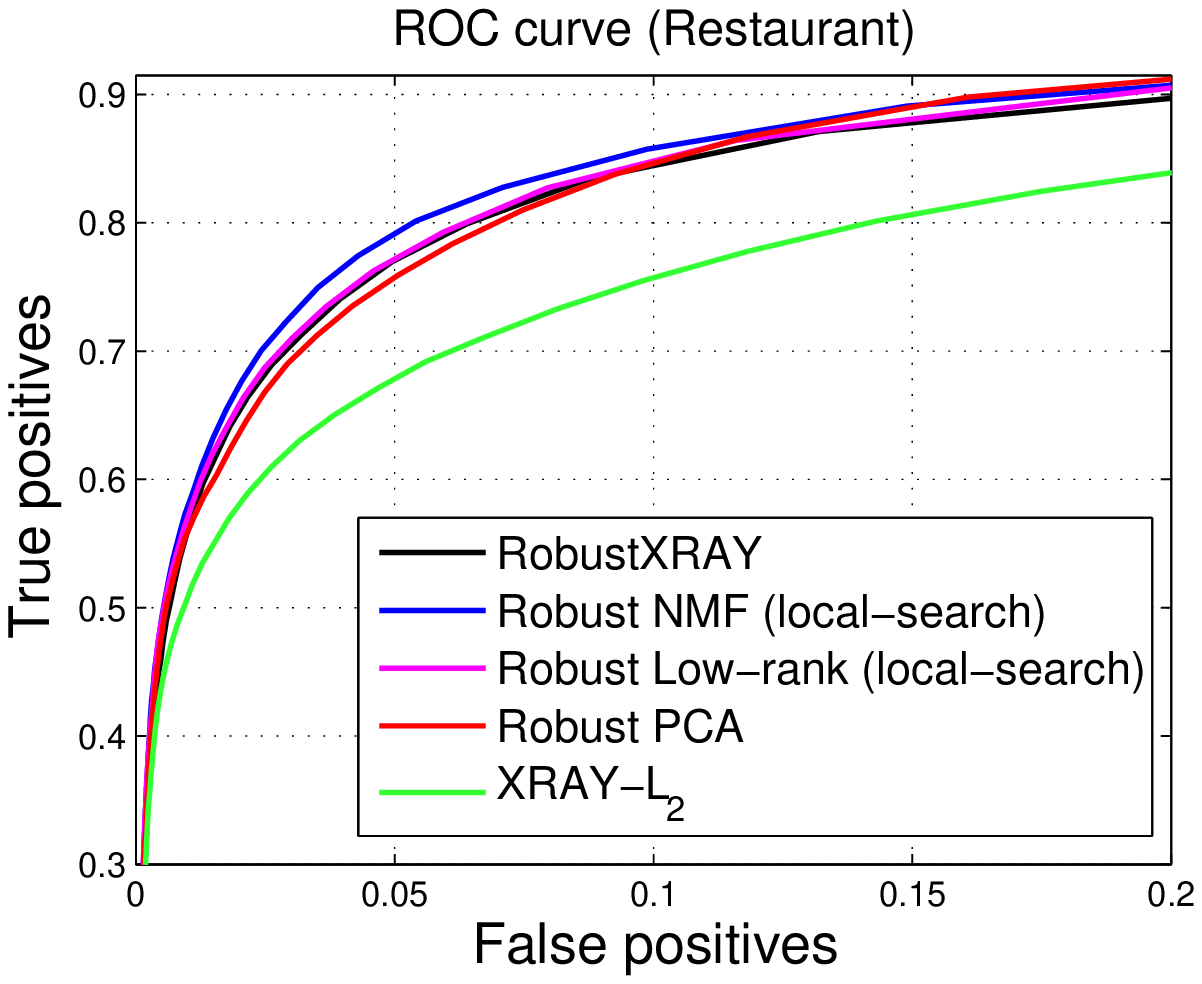}
\end{minipage}
\begin{minipage}{0.33\textwidth}
\centering
\includegraphics[width=1.1\columnwidth, height=5cm]{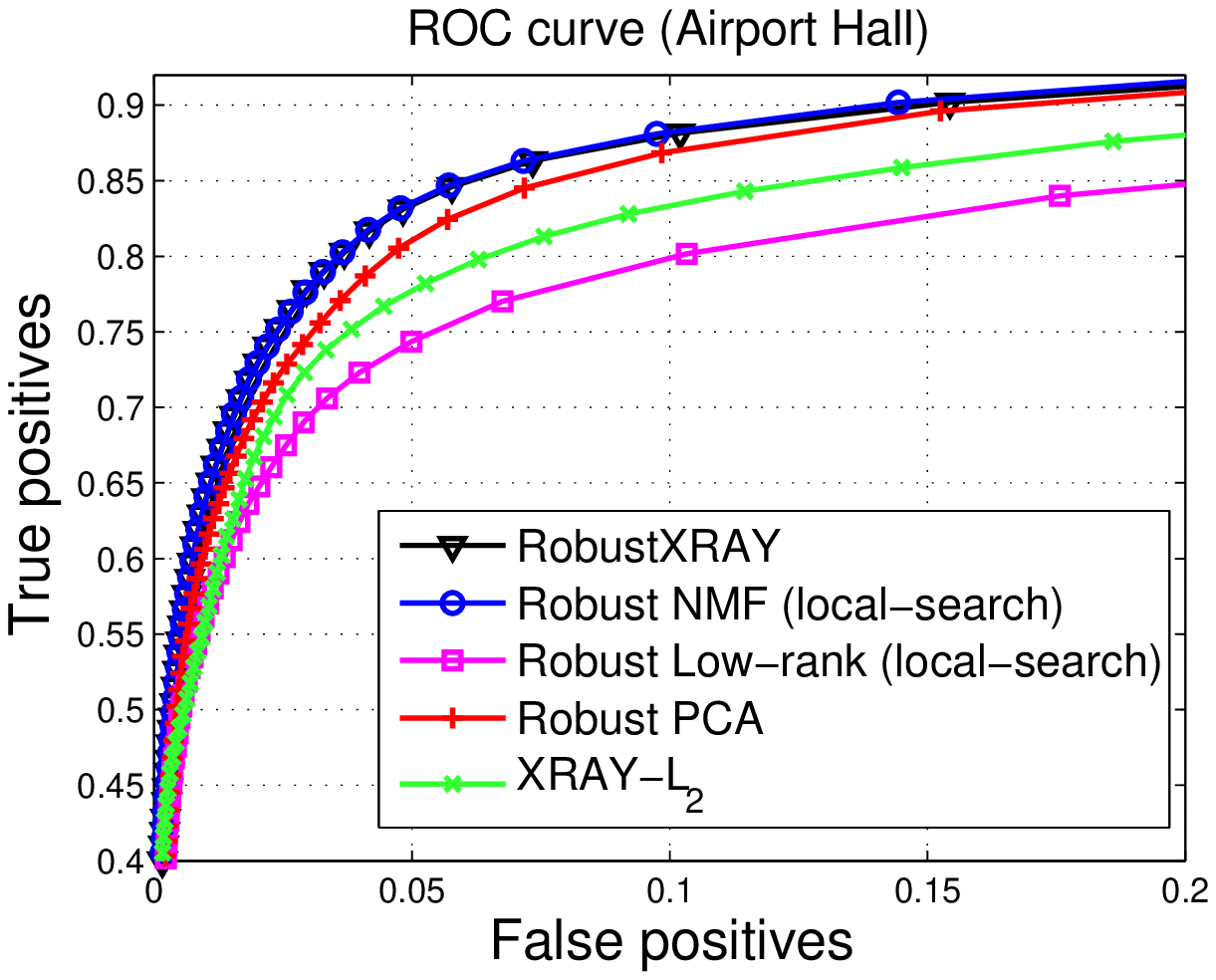}
\end{minipage}
\begin{minipage}{0.33\textwidth}
\centering
\includegraphics[width=1.1\columnwidth, height=5cm]{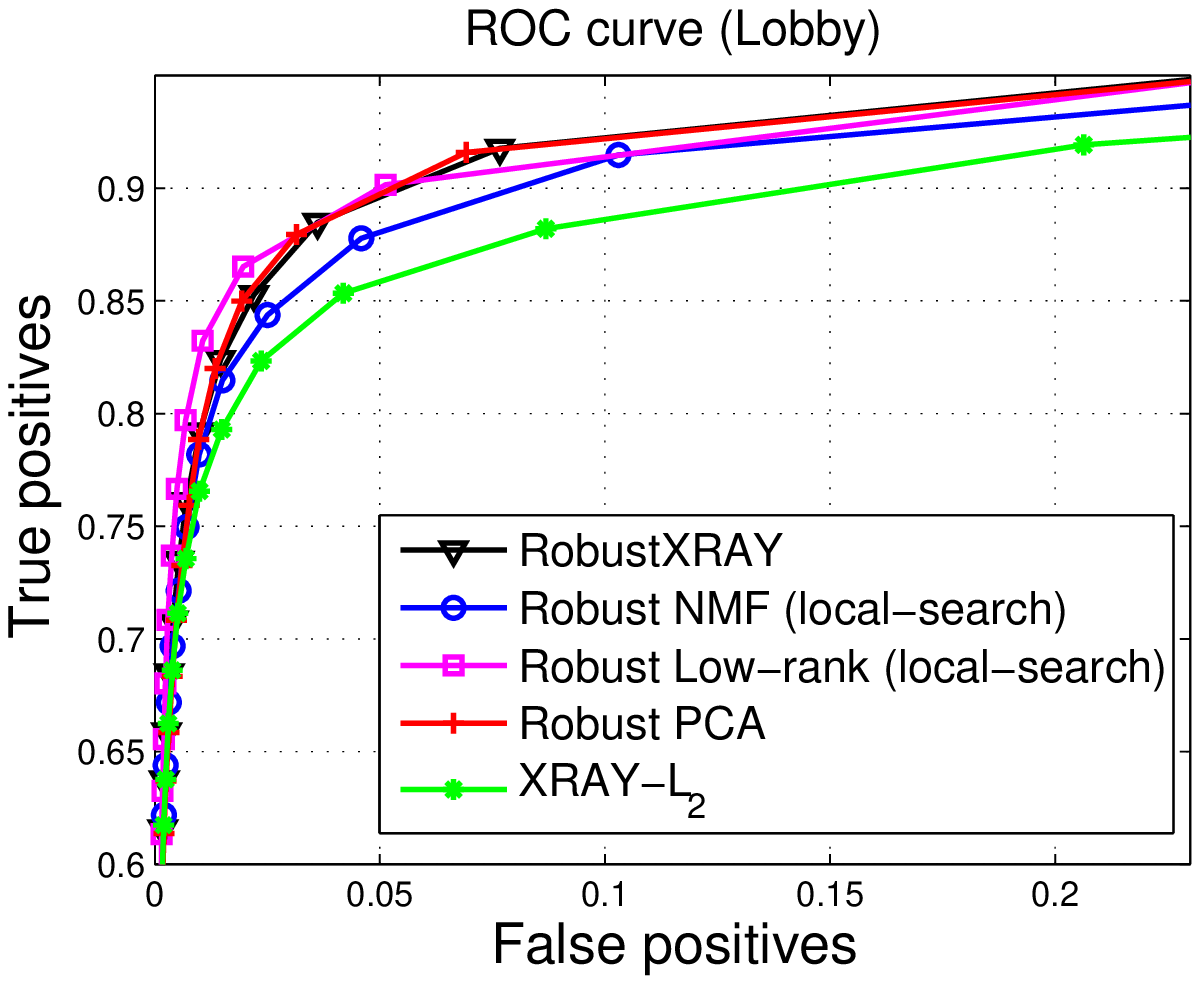}
\end{minipage}
\vspace{-1mm}
\caption{Foreground-background separation: ROC curves with various methods for \emph{Restaurant}, \emph{Airport Hall} 
and \emph{Lobby} video sequences. The ranges for X and Y axes are chosen to better highlight the differences 
among the ROC curves. (best viewed in color)}
\label{fig:fgbg}
\end{center}
\vspace{-4mm}
\end{figure*}

\subsection{Foreground-background Separation}
\label{sec:fgbg}
In this section, we consider the problem of foreground-background separation in video. The camera position is assumed to be \emph{almost} fixed
throughout the video. In all video frames, the camera captures the background scene superimposed with a \emph{limited} 
foreground activity (e.g., movement of people or objects). Background is assumed to be stationary or slowly varying
across frames (variations in illumination and shadows due to lighting or time of day) while foreground is assumed to be
composed of objects that move across frames but span only a few pixels. If we vectorize all video frames and stack
them as rows to form the matrix $\vX$, the foreground-background separation problem can be modeled as decomposing $\vX$
into a low-rank matrix $\vL$ (modeling the background) and a sparse matrix $\vS$ (modeling the foreground). 

{\bf Connection to Median Filtering:} Median filtering is one of the most commonly used background modeling techniques~\cite{sen2004robust},
which simply models the background as the pixel-wise median of the video frames. The assumption is that each pixel location
stays in the background for more than half of the video frames. Consider the NMF of inner-dimension 1: $\min_{\vw\geq 0,\vh\geq 0} \|\vX-\vw\vh\|_1$.
If we constrain the vector $\vw$ to be all ones, the solution $\vh^* = \argmin_{\vw=\mbf{1},\vh\geq 0} \|\vX-\vw\vh\|_1$ is nothing
but the element-wise median of all rows of $\vX$. More generally, if $\vw$ is constrained to be such that $\vw_i=c>0$ $\forall$ $i$,
the solution $\vh^*$ is a scaled version of the element-wise median vector. 
Hence Robust NMF under this very restrictive setting is equivalent to median filtering on the video frames, and we can hope
that loosening this assumption and allowing for higher inner-dimension in the factorization can help in modeling more variations
in the background. 

We use three video sequences for evaluation: \emph{Restaurant}, \emph{Airport Hall} and \emph{Lobby}
 \cite{fgbg.9datasets}. 
{Restaurant} and {Airport Hall} are videos taken at a buffet restaurant and at a hall of an airport, respectively.
The lighting are distributed from the ceilings and signiﬁcant shadows of moving persons cast on the ground surfaces from different 
directions can be observed in the videos. {Lobby} video sequence was captured from a lobby in an
office building and has background changes due to lights being switched on/off. 
The ground truth (whether a pixel belongs to foreground or background) is also available for these video sequences
and we use it to generate the ROC curves. 
We mainly compare Robust\xray~ with Robust PCA which is widely considered
state of the art methodology for this task in the Computer Vision community.
In addition, we also compare with two local-search based approaches:
(i) \emph{Robust NMF (local-search)} which solves $\min_{\vW\geq 0,\vH\geq 0} \|\vX-\vW\vH\|_1$ using local search,
and (ii) \emph{Robust Low-rank (local-search)} which solves $\min_{\vW,\vH} \|\vX-\vW\vH\|_1$ using local search.
We use an ADMM based optimization procedure for both these local-search methods. 
We also show results with \xrayl2 of \citet{xray.icml13} to highlight the importance of having near-separable NMFs with
$\ell_1$ loss for this problem. For both \xrayl2 and Robust\xray~, we do 1 to 2 refitting steps to refine the solution
(i.e., solve $\vH=\min_{\vB\geq 0}\|\vX-\vX_A\vB\|_1$ then solve $\vW=\min_{\vC\geq 0}\|\vX-\vC\vH\|_1$).
For all the methods, we do a grid search on the parameters (inner-dimension or rank parameter for the factorization methods
and $\lambda$ parameter for Robust PCA) and report the best results for each method. 

Fig.~\ref{fig:fgbg} shows the ROC plots for the three video datasets. For Restaurant data, all robust methods
(those with $\ell_1$ penalty on the foreground) perform almost similarly. 
For Airport Hall data, Robust\xray~ is tied with local-search based Robust NMF and these two are better than other methods. 
Surprisingly, \xrayl2 performs better than local-search based Robust Low-rank which might be due to bad initialization. 
For Lobby data, local-search based Robust low-rank, Robust PCA and Robust\xray~ perform almost similarly, and are better than
local-search based Robust NMF. The results on these three datasets show that Robust\xray~ is a promising method for
the problem of foreground-background separation which has a huge advantage over Robust PCA in terms of speed. Our MATLAB
implementation was at least 10 times faster than the inexact Augmented Lagrange Multiplier (i-ALM) implementation of 
\citet{lin2010alm}.

\section{Conclusion and Future Work} We have proposed generalized conical hull
algorithms to extend near-separable NMFs to robust ($\ell_1$) loss function and
Bregman divergences. Empirical results on exemplar selection and video
background-foreground modeling problems suggest that this is a promising
methodology. Avenues for future work include formal theoretical analysis of
noise robustness and applications to online settings.

\sloppy
\small
\bibliography{fastconicalhull}
\bibliographystyle{icml2014}

\normalsize
\appendix
\section{Near-separable NMF with Bregman divergence}
Let $A$ be the set of anchors selected so far by the algorithm.
Let $\phi(\cdot)$ be the strictly convex function that induces the Bregman divergence 
$D_\phi(x,y)=\phi(x)-\phi(y)-\phi'(y)(x-y)$.
For two matrices $\vX$ and $\vY$, we consider the Bregman divergence of the form
$D_\phi(\vX,\vY):=\sum_{ij}D_\phi(\vX_{ij},\vY_{ij})$. We make the following assumptions for the proofs
in this section which are satisfied by almost all the Bregman divergences of interest:

\emph{Assumption 1:} The first derivative of $\phi$, $\phi'$, is smooth.  

\emph{Assumption 2:} The second derivative of $\phi$, $\phi''$, is positive at all nonzero points,
i.e., $\phi''(x)>0$ for $x\neq 0$. Note that strict convexity does not necessarily imply $\phi''(x)>0$ for all $x$
while the converse is true. 

Here we consider the projection step 
\begin{align}
\vH = \argmin_{\vB\geq 0} D_\phi(\vX,\vX_A\vB),
\label{eq:projbreg}
\end{align}
and the following selection criteria to identify the next anchor column:
\begin{align}
j^\star = \argmax_j \frac{(\phi''(\vX_A\vH_i)\odot\vR_i)^T\vX_j}{\vp^T \vX_j}
\label{eq:selbregapp}
\end{align} 
for any $i:\|\vR_i\|>0$, where $\vR=\vX-\vX_A\vH$ and $\phi''(\vx)$ is the vector of second derivatives of $\phi(\cdot)$ evaluated
at individual elements of the vector $\vx$ (i.e., $[\phi''(\vx)]_j=\phi''(\vx_j)$),
$\vp$ is any strictly positive vector not collinear with $(\phi''(\vX_A\vH_i)\odot\vR_i)$, and $\odot$ denotes element-wise product of vectors. 
For a matrix $\vX$, $\vX_{ij}$ denotes $ij$th element, $\vX_i$ denotes $i$th column and $\vX_A$ denotes the columns of $\vX$ indexed by set $A$.
$\vX_{A_{ij}}$ denotes $ij$th column of matrix $\vX_A$. 

Here we show the following result
regarding the anchor selection property of Eq.~\ref{eq:selbregapp}. Recall that an anchor is a column that can not be expressed 
as conic combination of other columns in $\vX$.
\begin{thm}
If the maximizer of Eq.~\ref{eq:selbregapp} is unique, the data point $\vX_{j^\star}$ added at each iteration in the Selection step, is 
an anchor that has not been selected in one of the previous iterations.
\label{thm:breg}
\end{thm}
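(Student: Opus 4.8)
The plan is to mirror the three-step argument used for the $\ell_1$ case, replacing the subgradient matrix $\vD_i^\star$ by the \emph{explicit} gradient vector $\vd_i := \phi''(\vX_A\vH_i)\odot\vR_i$ that already appears in the selection criterion of Eq.~\ref{eq:selbregapp}. First I would establish the Bregman analogue of Lemma~\ref{lem:kktnnlad}. A clean simplification over the $\ell_1$ setting is that, by Assumption~1, the map $\vB\mapsto D_\phi(\vX,\vX_A\vB)$ is smooth, so its subdifferential is a single gradient and there is no set-valued ambiguity to resolve. Forming the Lagrangian $\mcal{L}(\vB,\vLambda)=D_\phi(\vX,\vX_A\vB)-tr(\vLambda^T\vB)$ with $\vLambda\geq 0$ and differentiating $D_\phi(x,y)=\phi(x)-\phi(y)-\phi'(y)(x-y)$ in $y$ (which gives $-\phi''(y)(x-y)$), the column-wise stationarity condition at the optimum $\vB=\vH$ becomes $\vX_A^T\vd_i=-\vLambda_i\leq 0$. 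This yields $\vd_i^T\vX_A\leq 0$ directly.

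Second, for the analogue of Lemma~\ref{lem:extcone}, I would invoke complementary slackness $\vLambda_{ji}\vH_{ji}=0$ to obtain $\vd_i^T\vX_A\vH_i=-\vLambda_i^T\vH_i=0$, so that for an exterior point, writing $\vX_i=\vR_i+\vX_A\vH_i$, we have $\vd_i^T\vX_i=\vd_i^T\vR_i$. The key computation is $\vd_i^T\vR_i=\sum_k \phi''((\vX_A\vH_i)_k)\,\vR_{ki}^2$. Here Assumption~2 ($\phi''>0$ at nonzero points) does the essential work: every summand is non-negative by convexity, and since $\vR_i\neq 0$ for an exterior point at least one summand is strictly positive, giving $\vd_i^T\vX_i>0$.

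Third, I would run the normalization argument essentially verbatim from the $\ell_1$ theorem. Setting $\vY_j=\vX_j/(\vp^T\vX_j)$ and $\vY_A=\vX_A[diag(\vp^T\vX_A)]^{-1}$, the separability assumption $\vX=\vX_A\vH$ gives $\vY_j=\vY_A\vC_j$ with $\vC_j\geq 0$ and $\mbf{1}^T\vC_j=1$. The two steps above show the maximum of $\vd_i^T\vY_j$ over all $j$ is attained off the current anchor set $A^t$ (columns in $A^t$ give a non-positive value via the first step, while the chosen exterior index gives a strictly positive one via the second), and the convex-combination bound $\vd_i^T\vY_j=\vd_i^T\vY_A\vC_j\leq\max_{k\in A}\vd_i^T\vY_k$ forces this maximizer to lie in $A\setminus A^t$. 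Uniqueness of the maximizer then identifies $\vX_{j^\star}$ as a genuinely new anchor.

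The step I expect to be most delicate is the second one. I must keep track of the fact that $\phi''$ is evaluated at the \emph{projection} entries $(\vX_A\vH_i)_k$ rather than at the data entries $\vX_{ki}$, and check that these arguments lie in the relative interior of $\cS$ so that Assumption~2 applies termwise and the sum is well defined; this is precisely where the smoothness and positivity hypotheses on $\phi$ are genuinely used, and it is the one place the argument is more subtle than in the $\ell_1$ case. The remaining normalization and convex-combination manipulations are routine transcriptions of the earlier proof.
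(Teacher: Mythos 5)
Your proposal is correct and follows essentially the same route as the paper's own proof: its Lemma~\ref{lem:kktbreg} is your stationarity step $\vX_A^T(\phi''(\vX_A\vH_i)\odot\vR_i)=-\vLambda_i\leq 0$, its Lemma~\ref{lem:extconeapp} is your complementary-slackness computation $\vd_i^T\vX_i=\sum_k\phi''((\vX_A\vH_i)_k)\vR_{ki}^2>0$, and the paper then invokes the normalization argument of Theorem~3.1 verbatim with $\vD_i^\star:=\phi''(\vX_A\vH_i)\odot\vR_i$, exactly as you do. Your closing caveat about where Assumption~2 is applied (at the projection entries $(\vX_A\vH_i)_k$) is, if anything, slightly more careful than the paper, which simply asserts $\phi''(\vX_A\vH_i)>0$.
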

The proof of this theorem follows the same style as the proof of Theorem $3.1$ in the main paper. 
We need the following lemmas to prove Theorem~\ref{thm:breg}.

\begin{lem}
Let $\vR$ be the residual matrix obtained after Bregman projection of columns of $\vX$ onto the current cone.
Then, $(\phi''(\vX_A\vH)\odot\vR)^T\vX_A \leq 0$, where $\vX_A$ are anchor columns selected so far by the algorithm.
\label{lem:kktbreg}
\end{lem}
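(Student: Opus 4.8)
The plan is to mirror the argument for Lemma~\ref{lem:kktnnlad}, but replacing the subgradient of the $\ell_1$ norm by the genuine gradient of the Bregman objective, which exists because Assumption~1 guarantees $\phi'$ (hence $\phi''$) is smooth. First I would observe that the projection in Eq.~\ref{eq:projbreg} decouples across columns of $\vX$, so it suffices to establish the inequality one column at a time and then stack the results into the claimed matrix inequality.

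Next I would form the Lagrangian for the column-wise problem $\min_{\vb\geq 0}\sum_i D_\phi(\vx_i,(\vX_A\vb)_i)$, introducing a vector of non-negative multipliers $\vlam$ for the constraint $\vb\geq 0$, namely $\mcal{L}(\vb,\vlam)=\sum_i D_\phi(\vx_i,(\vX_A\vb)_i)-\vlam^T\vb$. The one computation that drives everything is the partial derivative of a single Bregman term with respect to its \emph{second} argument: differentiating $D_\phi(x,y)=\phi(x)-\phi(y)-\phi'(y)(x-y)$ in $y$ collapses to $-\phi''(y)(x-y)$, since the $-\phi'(y)$ coming from $-\phi(y)$ cancels the $+\phi'(y)$ coming from $-\phi'(y)(x-y)$. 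Applying the chain rule through $y_i=(\vX_A\vb)_i$ then yields $\nabla_\vb\,\mcal{L}=-\vX_A^T(\phi''(\vX_A\vb)\odot\vr)-\vlam$, where $\vr=\vx-\vX_A\vb$ is the residual column and $\phi''(\cdot)$ is applied entrywise.

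At the optimum $\vb=\vH_j$, stationarity $0=\nabla_\vb\,\mcal{L}$ gives $\vX_A^T(\phi''(\vX_A\vH_j)\odot\vR_j)=-\vlam$, and because $\vlam\geq 0$ the left-hand side is entrywise non-positive. Transposing and reassembling over all columns $j$ then produces exactly $(\phi''(\vX_A\vH)\odot\vR)^T\vX_A\leq 0$, as claimed.

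The one point requiring care, and the main obstacle, is the justification of the KKT stationarity condition itself. Unlike the $\ell_1$ projection, $D_\phi(\vX,\vX_A\vB)$ need not be convex in $\vB$ for a general $\phi$, since the Bregman divergence is convex in its first argument whereas $\vB$ enters the second. I would therefore invoke KKT only as a first-order \emph{necessary} condition at the returned optimum, which is valid because the constraints $\vB\geq 0$ are linear and hence satisfy a constraint qualification automatically; no convexity in $\vB$ is needed for this direction. Smoothness from Assumption~1 is precisely what turns the stationarity condition into an equality rather than the subdifferential inclusion seen in Lemma~\ref{lem:kktnnlad}, which is why the conclusion here is the clean inequality $(\phi''(\vX_A\vH)\odot\vR)^T\vX_A\leq 0$ rather than an existence statement over a set of candidate matrices.
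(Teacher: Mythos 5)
Your proposal is correct and follows essentially the same route as the paper's proof: form the Lagrangian for the non-negatively constrained Bregman projection, differentiate so that the $\phi'$ terms cancel and leave $\vX_A^T(\phi''(\vX_A\vH)\odot\vR)=-\vLam$, then conclude non-positivity from $\vLam\geq 0$ (your column-wise decoupling versus the paper's entrywise matrix derivative is purely cosmetic). Your added observation---that stationarity must be invoked as a first-order \emph{necessary} condition, valid under the linear constraint qualification, since $D_\phi(\vX,\vX_A\vB)$ need not be convex in $\vB$---is a point the paper silently glosses over; it refines rather than changes the argument.
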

\begin{proof}
Residuals are given by $\vR = \vX - \vX_A \vH$, where $\vH = \argmin_{\vB\geq 0} D_\phi(\vX,\vX_A\vB)$. \\
Forming the Lagrangian for Eq.~\ref{eq:projbreg}, we get
$\mcal{L}(\vB,\vLam) = D_\phi(\vX,\vX_A\vB) - tr(\vLam^T\vB)$, where the matrix $\vLam$ contains
the non-negative Lagrange multipliers.

At the optimum $\vB = \vH$, we have $\nabla\mcal{L}=0$ which means
\begin{align*}
&~~~~~~ \frac{\partial}{\partial\vB_{mn}} \biggl. D_\phi(\vX,\vX\vB)\biggr|_{\vB=\vH} = \vLambda_{mn} \\
& \Rightarrow \sum_i \biggl[-\vX_{A_{im}}\phi'((\vX_A\vH)_{in}) - \vX_{A_{im}}\phi''((\vX_A\vH)_{in}) (\vX_{in} \biggr.  \\
&-\biggl. (\vX_A\vH)_{in}) + \phi'((\vX_A\vH)_{in}) \vX_{A_{im}} \biggr]  = \vLambda_{mn}\\
& \Rightarrow \sum_i -\biggl[\vX_{A_{im}}\phi''((\vX_A\vH)_{in}) (\vX_{in}-(\vX_A\vH)_{in})\biggr] = \vLambda_{mn} \\
& \Rightarrow \left[\phi''(\vX_A\vH)\odot (\vX-\vX_A\vH)\right]_n^T \vX_{A_m} = -\vLambda_{mn} \\
& \Rightarrow \left[\phi''(\vX_A\vH)\odot (\vX-\vX_A\vH)\right]^T\vX_A = -\vLambda^T \leq 0,
\end{align*}
\noindent where $\phi''(\cdot)$ is the second derivative of $\phi(\cdot)$ that operates
element-wise on the argument (vector or matrix). 
\end{proof}

\begin{lem}
For any point $\vX_i$ exterior to the current cone, we have $(\phi''(\vX_A\vH)\odot\vR)_i^T \vX_i > 0$.
\label{lem:extconeapp}
\end{lem}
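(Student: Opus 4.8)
The plan is to mirror the proof of Lemma~\ref{lem:extcone} from the $\ell_1$ setting, with the sign-subgradient $\vD$ replaced by the smooth weighting $\vG := \phi''(\vX_A\vH)\odot\vR$. Since $\phi'$ is smooth (Assumption~1), the projection objective $D_\phi(\vX,\vX_A\vB)$ is differentiable, so the stationarity condition holds with a genuine gradient rather than a set-valued subdifferential; this actually makes the argument cleaner than its $\ell_1$ analogue, where one had to select a suitable $\vD^\star$. The two ingredients will be the KKT identity already extracted in Lemma~\ref{lem:kktbreg} and complementary slackness.

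First I would take from the proof of Lemma~\ref{lem:kktbreg} the matrix identity $\vG^T\vX_A = -\vLambda^T$, where $\vLambda\geq 0$ collects the multipliers for the constraint $\vB\geq 0$. Reading off the $i$th row gives $\vG_i^T\vX_A = -\vLambda_i^T$ as a row vector indexed by the current anchors. Complementary slackness, $\vLambda_{ki}\vH_{ki}=0$ for every anchor $k$, then kills the pairing with the $i$th column of $\vH$:
\begin{align*}
\vG_i^T\vX_A\vH_i = -\vLambda_i^T\vH_i = -\sum_k \vLambda_{ki}\vH_{ki} = 0.
\end{align*}
Substituting $\vX_i = \vR_i + \vX_A\vH_i$ then collapses the cross term, leaving $\vG_i^T\vX_i = \vG_i^T\vR_i$, exactly as the $\ell_1$ proof reduced $\vD_i^{\star^T}\vX_i$ to $\vD_i^{\star^T}\vR_i$.

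It remains to establish $\vG_i^T\vR_i > 0$, and this is where the argument genuinely departs from the $\ell_1$ case. Expanding the element-wise product gives
\begin{align*}
\vG_i^T\vR_i = \sum_j \phi''\big((\vX_A\vH)_{ji}\big)\,\vR_{ji}^2,
\end{align*}
a sum of non-negative terms because convexity forces $\phi''\geq 0$. Since $\vX_i$ is exterior to the cone we have $\vR_i\neq 0$, so some $\vR_{ji}\neq 0$, and Assumption~2 makes $\phi''$ strictly positive at the corresponding argument, rendering that term---and hence the whole sum---strictly positive. The hard part, and the only place the proof needs real care, is this final strict-positivity step: Assumption~2 only guarantees $\phi''(x)>0$ for $x\neq 0$, so I must check that whenever $\vR_{ji}\neq 0$ the evaluation point $(\vX_A\vH)_{ji}$ is itself nonzero. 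For the divergences of interest---generalized KL and Itakura--Saito---the natural parameter domain forces $\vX_A\vH$ to be strictly positive, so $\phi''$ is always evaluated where it is positive and the conclusion follows; outside that regime one would need an additional hypothesis to rule out a vanishing weight at an index with nonzero residual.
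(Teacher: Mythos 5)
Your proof is correct and follows essentially the same route as the paper's: the KKT identity from Lemma~\ref{lem:kktbreg} together with complementary slackness gives $(\phi''(\vX_A\vH_i)\odot\vR_i)^T\vX_A\vH_i=0$, reducing the claim to the weighted sum of squares $\sum_j \phi''((\vX_A\vH)_{ji})\vR_{ji}^2>0$ for the nonzero residual of an exterior point (the paper merely runs these two steps in the opposite order). Your closing caveat is also well taken: the paper simply takes $\vv=\phi''(\vX_A\vH_i)>0$, which under Assumption~2 implicitly requires $(\vX_A\vH)_{ji}\neq 0$ at every index with $\vR_{ji}\neq 0$, a point your proof makes explicit rather than glossing over.
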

\begin{proof}
For a vector $\vv>0$ and any vector $\vz$, we have $(\vv\odot\vz)^T\vz = \sum_i\vv_i \vz_i^2 > 0$.
Taking $\vv$ to be $\phi''(\vX_A\vH_i)$ and $\vz$ to be $\vR_i$, we have
\begin{align}
\begin{split}
& (\phi''(\vX_A\vH_i)\odot\vR_i)^T \vR_i > 0 \\
& \Rightarrow (\phi''(\vX_A\vH_i)\odot\vR_i)^T (\vX_i -\vX_A\vH_i) > 0
\end{split}
\label{eq:extcone1}
\end{align}
By complementary slackness condition at the optimum, we have $\vLambda_{ji}\vH_{ji}=0$.
From the KKT condition in the proof of previous lemma, we have $\left[\phi''(\vX_A\vH_i)\odot \vR_i\right]^T\vX_A = -\vLambda_i^T$.
Hence 

$\left[\phi''(\vX_A\vH_i)\odot \vR_i\right]^T\vX_A\vH_i = -\vLambda_i^T\vH_i = 0$.

Hence Eq~\ref{eq:extcone1} reduces to 

$(\phi''(\vX_A\vH_i)\odot\vR_i)^T \vX_i > 0$.
\end{proof}

Using the above two lemmas, we can prove Theorem~\ref{thm:breg} as follows.

\begin{proof}
Denoting $\vD_i^\star := (\phi''(\vX_A\vH_i)\odot\vR_i)$
in the proof of Theorem 3.1 of the main paper, all the statements of the proof directly apply in the light of 
Lemma~\ref{lem:kktbreg} and Lemma~\ref{lem:extconeapp}. 
\end{proof}

\end{document}